\documentclass[11pt]{article}

\usepackage[a4paper,margin=1in]{geometry}
\usepackage{lmodern}
\usepackage{microtype}
\usepackage{amsmath,amsthm,amssymb,amsfonts,mathtools}
\usepackage{bm}
\usepackage{bbm}
\usepackage{authblk}
\usepackage{graphicx}
\usepackage{booktabs}
\usepackage[numbers,sort&compress]{natbib}
\usepackage{hyperref}
\usepackage[nameinlink]{cleveref}
\usepackage{enumitem}
\usepackage{float}
\usepackage[normalem]{ulem}
\setlist{nosep}

\hypersetup{colorlinks=true,linkcolor=blue,citecolor=magenta,urlcolor=blue}

\newtheorem{theorem}{Theorem}[section]
\newtheorem{proposition}[theorem]{Proposition}

\newtheorem{remark}[theorem]{Remark}
\newtheorem{corollary}[theorem]{Corollary}
\graphicspath{{figures/}{img/}} 

\usepackage[dvipsnames]{xcolor} 

\usepackage{braket}

\DeclarePairedDelimiterX{\inp}[2]{\langle}{\rangle}{#1, #2}

\title{Kriging via Variably Scaled Kernels}

\author[1]{\normalsize Audone G}
\author[2]{\normalsize Marchetti F}
\author[3]{\normalsize Perracchione E}
\author[4]{\normalsize Rossini M}

\affil[1]{\small Faculty of Engineering and Physical Sciences, University of Southampton, United Kindom, \hfill\texttt{g.audone@soton.ac.uk}}
\affil[2]{\small Dipartimento di Matematica \lq\lq Tullio Levi-Civita", Università di Padova, Italy \texttt{francesco.marchetti@unipd.it}}
\affil[3]{\small Dipartimento di Scienze Matematiche Giuseppe Luigi Lagrange, Politecnico di Torino, Italy, \texttt{emma.perracchione@polito.it}}
\affil[4]{\small Università degli Studi di Milano-Bicocca, Dipartimento di Matematica e Applicazioni, Italy, \texttt{milvia.rossini@unimib.it}}

\date{\today}

\begin{document}
\maketitle

\begin{abstract}
Classical Gaussian processes and Kriging models are commonly based on stationary kernels, whereby correlations between observations depend exclusively on the relative distance between scattered data. While this assumption ensures analytical tractability, it limits the ability of Gaussian processes to represent heterogeneous correlation structures. In this work, we investigate variably scaled kernels as {an effective tool}  for constructing non-stationary Gaussian processes by explicitly modifying the correlation structure of the data. Through a scaling function, variably scaled kernels alter the correlations between data and enable the modeling of targets exhibiting abrupt changes or discontinuities. We analyse the resulting predictive uncertainty via the variably scaled kernel power function and clarify the relationship between variably scaled kernels-based constructions and classical non-stationary kernels. Numerical experiments demonstrate that variably scaled kernels-based Gaussian processes yield improved reconstruction accuracy and provide uncertainty estimates that reflect the underlying structure of the data. 
\end{abstract}

\section{Introduction}

Gaussian Processes (GPs) and Kriging models constitute a well-established framework in statistical learning for regression and classification problems \cite{Kozak2019,rasmussen2005gaussian,stein1999interpolation,cressie1993statistics}. In their standard formulation, GPs are defined by stationary covariance kernels, meaning that the covariance between any two observations depends only on their  corresponding locations. This assumption yields a number of desirable analytical properties, including a direct connection with Reproducing kernel Hilbert Space (RKHS) theory \cite{Berlinet2004rkhs,Aronszajn1950rkhs,scheuerer2013interpolation}.
Beyond their classical statistical interpretation, GPs can also be viewed as a principled framework for approximating an unknown function from a given sample. From this perspective, the choice of the covariance kernel encodes prior assumptions on smoothness and correlation, while conditioning on the observed data produces a predictor that acts as an interpolant in the noise-free case and as a regularized estimator in the presence of noise.
However, stationarity, as assumed in simple Kriging, imposes a spatially homogeneous correlation structure and therefore limits the class of functions that can be accurately represented. In particular, stationary kernels are not well suited to describe targets exhibiting localized variations in regularity or abrupt transitions \cite{goldberg1998heteroscedastic,kersting2007mlhgp,lazaro2011vhgp}. To overcome these limitations, several non-stationary constructions have been proposed, including kernels with spatially varying parameters \cite{paciorek2004nonstationary,paciorek2006environmetrics}, flexible kernel constructions such as spectral mixtures \cite{wilson2013sm}, and hierarchical models based on domain partitioning, such as treed Gaussian processes \cite{gramacylee2008tgp}.

To relax the stationarity assumption in GPs, in this paper we propose a valid and effective alternative based on Variably Scaled Kernels (VSKs) \cite{bozzini2015interpolation,rossini2022overview,campi2021vsk}, which offer a natural and mathematically structured mechanism for introducing adaptivity into kernel based models.
A VSK is constructed by augmenting the input space through a scaling map that adds extra coordinates to each point, and by evaluating a stationary kernel in the resulting lifted space. Although the kernel remains stationary in this augmented space, the induced kernel on the original domain becomes non-stationary. The scaling map effectively modifies the geometry of the input domain, altering pairwise distances and therefore changing the correlation structure between data points.
When embedded into the Gaussian process framework, this construction yields a non-stationary covariance function in a  simple and analytically transparent way. Variably Scaled Kernels thus provide a flexible tool for Kriging regression in heterogeneous settings, allowing the model to adapt locally to variations in smoothness or spatial dependence while preserving the probabilistic structure of Gaussian process modeling.
The proposed approach leads to a class of non stationary processes characterized by geometry induced correlation structures.
 We analyse the associated covariances, establish explicit connections with classical non-stationary kernel constructions, and investigate predictive uncertainty through the VSK power function. Numerical experiments on synthetic datasets illustrate the behaviour of the proposed models and their ability to represent localized irregularities within a Gaussian process setting.

The paper is organized as follows. In Section \ref{preliminari}, we review the main concepts of kernel interpolation and their relationship with Kriging and GPs. Section \ref{VSK-GP} constitutes the core of the paper and develops the connection between variably scaled kernels in approximation theory and Gaussian process modeling. Numerical experiments are presented in Section \ref{sec:examples}, and conclusions are drawn in Section \ref{conclusions}.

\section{Statistics and approximation with kernels}
\label{preliminari}
Let $X=\{x_i\}_{i=1}^N\subset\Omega$, with $\Omega\subset\mathbb{R}^d$, be a set of pairwise distinct data sites, and let
$\{f_i\}_{i=1}^N := \{f(x_i)\}_{i=1}^N$ be the corresponding observed responses, which we assume to be sampled from an (unknown) function
$f:\Omega\to\mathbb{R}$. The goal of an approximation procedure is to construct a function $s_X:\Omega\to\mathbb{R}$ that approximates the underlying data-generating process, whereas in the interpolation setting, the stronger requirement $s_X(x_i)=f_i$ is imposed for all $i=1,\dots,N$.

A classical approach constructs $s_X$ as a linear combination of coefficients and an associated basis
$\{b_i(\cdot)\}_{i=1}^N$. Uniqueness of such a representation is ensured if $\{b_i(\cdot)\}_{i=1}^N$ forms a Haar system.
For $d>1$, this condition holds only for trivial Haar spaces, i.e., spaces spanned by a single function
\cite[Theorem 2.3, p.~19]{wendland2005scattered}. Nevertheless, when one considers data-dependent bases, as in kernel-based methods,
existence and uniqueness of the approximant can be guaranteed under suitable assumptions both on the kernel and  the data.

\subsection{Kernels in deterministic settings}\label{sec:appr}

Let $\Omega\subseteq\mathbb{R}^d$ and let $\kappa:\Omega\times\Omega\to\mathbb{R}$ be a symmetric positive definite kernel.
Associated with $\kappa$ there exists a unique reproducing kernel Hilbert space (RKHS) $\mathcal{H}$,
endowed with inner product $(\cdot,\cdot)_{\mathcal{H}}$, such that:
\begin{enumerate}[label=(\roman*)]
\item $\kappa(\cdot,x)\in\mathcal{H}$ for all $x\in\Omega$;
\item $\kappa$ satisfies the reproducing property:
\[
(g,\kappa(\cdot,x))_{\mathcal{H}}=g(x),
\qquad \forall\, g\in\mathcal{H},\ \forall\, x\in\Omega.
\]
\end{enumerate}
Consequently, the kernel can be expressed through the inner product as
\begin{equation*}
(\kappa(\cdot,x),\kappa(\cdot,x'))_{\mathcal{H}}=\kappa(x,x'),
\qquad x,x'\in\Omega.
\end{equation*}

More precisely, we consider rescaled kernels by a positive factor $\ell>0$, defined as
\begin{equation*}
\kappa_\ell(x,x')=\kappa\!\left(\frac{x}{\ell},\,\frac{x'}{\ell}\right),
\qquad x,x'\in\Omega,
\end{equation*}
where $\ell$ is usually referred to as the \emph{shape} or \emph{length-scale} parameter. 

In the interpolation setting, i.e., when exact data matching is required, the approximant is sought in the form
\begin{equation}\label{interpolante}
s_X(x)=\sum_{i=1}^N \alpha_i\,\kappa_\ell(x,x_i),
\qquad x\in\Omega,
\end{equation}
where, letting $y=(f(x_1),\dots,f(x_N))^\intercal$ and $\alpha=(\alpha_1,\dots,\alpha_N)^\intercal$,
the coefficients are determined by imposing the interpolation conditions, namely by solving the linear system
\begin{equation}\label{glob_sys}
K_{\ell}\,\alpha=y,
\qquad (K_{\ell})_{ij}=\kappa_\ell(x_i,x_j),
\qquad i,j=1,\ldots,N.
\end{equation}
Equivalently, letting ${\rm k}_{\ell}(x)=(\kappa_\ell(x,x_1),\dots,\kappa_\ell(x,x_N))^\intercal$, we may write
\begin{equation}\label{val_atteso}
s_X(x)={\rm k}_{\ell}(x)^\intercal K_{\ell}^{-1}y.
\end{equation}

Both \eqref{interpolante} and \eqref{glob_sys} require the invertibility of the kernel matrix $K_{\ell}$.
For positive definite kernels, this holds as long as $X$ is $\kappa_\ell$-unisolvent, i.e., if the family of functions
$\{\kappa_\ell(\cdot,x_1),\dots,\kappa_\ell(\cdot,x_N)\}$ is linearly independent. In particular,
\[
\sum_{i=1}^N c_i\,\kappa_\ell(\cdot,x_i)=0
\quad\Longrightarrow\quad
c_1=\cdots=c_N=0,
\]
which is equivalent to the nonsingularity of $K_{\ell}$. A sufficient condition is that $\kappa_\ell$ be
\emph{strictly} positive definite, in which case any set of pairwise distinct points is unisolvent.

Deterministic pointwise error bounds for kernel interpolation involve the so-called \emph{power function}
\begin{equation*}
P_{\kappa_\ell,X}(x)
=
\sqrt{\kappa_\ell(x,x)-{\rm k}_{\ell}(x)^\intercal K_{\ell}^{-1}{\rm k}_{\ell}(x)}.
\end{equation*}
In particular, classical bounds are of the form
\begin{equation*}
|f(x)-s_X(x)|
\;\le\;
P_{\kappa_\ell,X}(x)\,\|f\|_\mathcal{H},
\qquad x\in\Omega,\ \ f\in\mathcal{H}.
\end{equation*}

We further remark that the interpolant defined in \eqref{interpolante} is the minimum-norm interpolant in the associated RKHS, i.e.,
\begin{equation}\label{eq:minnorm}
s_X
=
\arg\min_{g\in\mathcal{H}} \|g\|_\mathcal{H}^2
\quad \text{subject to } g(x_i)=f_i,\ \ i=1,\dots,N.
\end{equation}

On the other hand, when the data are noisy or smoothing is desired, one seeks a smoothing spline (or a Tikhonov-regularized estimator)
associated with a quadratic loss. Namely, instead of solving \eqref{eq:minnorm}, one considers
\begin{equation*}
s_X^\lambda
=
\arg\min_{g\in\mathcal{H}}
\sum_{i=1}^N \big(g(x_i)-f_i\big)^2
+
\lambda\,\|g\|_\mathcal{H}^2,
\end{equation*}
where $\lambda>0$ is a regularization parameter. By the representer theorem (see, e.g., \cite{smola-et-al:2001}),
the solution has the form \eqref{interpolante}, but the coefficients now satisfy the regularized system
\begin{equation*}
(K_{\ell}+\lambda I)\alpha=y.
\end{equation*}
This yields
\begin{equation*}
s_X^\lambda(x)={\rm k}_{\ell}(x)^\intercal (K_{\ell}+\lambda I)^{-1}y.
\end{equation*}
The regularization parameter $\lambda$ controls the trade-off between data fidelity (small residuals) and smoothness (small RKHS norm).
The limiting case $\lambda=0$ recovers the interpolation solution \eqref{val_atteso}, provided that $K_{\ell}$ is invertible.

\subsection{Kernels in statistical settings}
To introduce the so-called \emph{simple Kriging} approach (see, e.g., \cite{Fasshauer-et-al:2015,scheuerer2013interpolation}), we interpret $f$ as a realization of a random field
$F=\{F_x,\,x\in\Omega\}$, where the $F_x$ are random variables defined on a common probability space, with
$\mathbb{E}[F_x]=0$ and finite second moment for all $x\in\Omega$.
Under these assumptions, the covariance function $\phi$ is a symmetric positive definite kernel.
More precisely, the observed values $y$ are interpreted as a realization of the random vector
$Y=(F_{x_1},\dots,F_{x_N})^\intercal$.
Assuming that $F$ is a Gaussian process with zero mean and covariance function $\phi$, we write
\[
F \sim \mathcal{GP}(0,\phi(x,x'))\quad x,x' \in \Omega.
\]

In many situations, the observations are affected by additive noise. Let $\varepsilon(x)$ denote the additive noise random variable associated with each $x\in\Omega$, with zero mean and variance $\sigma_n^2$, independent of $F$, and such that the noise variables at different locations are mutually independent.
Then the covariance function takes the form (see, e.g., \cite{rasmussen2005gaussian,Schaback_Wendland_2006})
\begin{equation*}
\phi(x,x')+\sigma_n^2\,\delta_{xx'},
\end{equation*}
where $\delta_{xx'}$ denotes the Kronecker delta.

In our setting, following the notation of Section~\ref{sec:appr}, we consider the covariance function
\begin{equation}\label{eq:cov-rasmussen}
\phi_{\ell,f,n}(x,x')=\sigma_f^2\,\kappa_\ell(x,x')+\sigma_n^2\,\delta_{xx'},
\end{equation}
where $\kappa_\ell$ is a normalized kernel, and
$\sigma_f>0$ denotes the process standard deviation.

The formulation \eqref{eq:cov-rasmussen} incorporates both the latent process variability (through $\sigma_f^2$)
and the observational noise (through $\sigma_n^2$).
In particular, when $\kappa_\ell(x,x)=1$ for all $x\in\Omega$, the marginal prior variance at any location $x\in\Omega$ is
\[
\mathrm{Var}[F_x]=\phi_{\ell,f,n}(x,x)=\sigma_f^2+\sigma_n^2,
\]
which decomposes into the latent signal variance and the measurement noise variance.

To construct the covariance matrix $\Phi_{\ell,f,n}\in\mathbb{R}^{N\times N}$, we separate signal and noise contributions.
Let $K_{\ell,f}=\sigma_f^2 K_{\ell}$ denote the signal covariance matrix. Then the full covariance matrix on the training set
$X=\{x_1,\dots,x_N\}$ is
\begin{equation*}
\Phi_{\ell,f,n}=K_{\ell,f}+\sigma_n^2 I
=\sigma_f^2 K_{\ell}+\sigma_n^2 I.
\end{equation*}

Similarly, the cross-covariance vector between a test location $x\in\Omega$ and the training points in $X$ is given by
\[
{\rm k}_{\ell,f}(x) = (\phi_{\ell,f,n}(x,x_1),\dots,\phi_{\ell,f,n}(x,x_N))^\intercal 
=
\sigma_f^2\,{\rm k}_\ell(x),
\]
since the noise term $\sigma_n^2\delta_{xx'}$ contributes only to the marginal covariances of the observed data,
and therefore does not affect cross-covariances between $F_x$ and $Y$.

In order to highlight the connection with the deterministic setting, we consider the conditional random variable
$F_x\mid Y=y$, describing the response at a test point $x\in\Omega$ conditioned on the observations $y$.
For simplicity, we denote it by $F_x\mid y$.
Its posterior distribution is Gaussian, with mean and variance given by
\cite{rasmussen2005gaussian,stein1999interpolation}:
\begin{align}\label{eq:posterior}
\mathbb{E}[F_x\mid y]
&=
{\rm k}_{\ell,f}(x)^\intercal \Phi_{\ell,f,n}^{-1}y,\\
\mathrm{Var}[F_x\mid y]
&=
\phi_{\ell,f,n}(x,x)
-
{\rm k}_{\ell,f}(x)^\intercal \Phi_{\ell,f,n}^{-1}{\rm k}_{\ell,f}(x). \nonumber
\end{align}

Factoring out $\sigma_f^2$ in \eqref{eq:posterior}, and letting
$\lambda=\sigma_n^2/\sigma_f^2$ denote the inverse signal-to-noise ratio, the posterior mean simplifies to
\begin{equation}\label{eq:mean-simple}
\mathbb{E}[F_x\mid y]
=
{\rm k}_{\ell}(x)^\intercal (K_{\ell}+\lambda I)^{-1}y,
\end{equation}
while the posterior variance becomes
\begin{equation}\label{eq:post_variance}
\mathrm{Var}[F_x\mid y]
=
\sigma_f^2\Big[\kappa_\ell(x,x)-{\rm k}_{\ell}(x)^\intercal (K_{\ell}+\lambda I)^{-1}{\rm k}_{\ell}(x)\Big]
+\sigma_n^2.
\end{equation}
In this form, the first term represents the uncertainty due to the latent process, whereas the second term corresponds
to the irreducible measurement noise.
This framework allows one to provide predictions together with confidence intervals.
For a fixed confidence level $1-\alpha\in(0,1)$, denoting by $z_{1-\alpha/2}$ the standard normal quantile, we define
\begin{equation*}
\mathrm{CI}^{(f)}_\alpha(x)
=
\mathbb{E}[F_x\mid y]\pm z_{1-\alpha/2}\sqrt{\mathrm{Var}[F_x\mid y]}.
\end{equation*}



To recover the deterministic interpolation setting exactly, we introduce the so-called \emph{smoothed data} \cite{Fasshauer-et-al:2015,Schaback_Wendland_2006}
\[
\hat{y}=(K_{\ell}+\lambda I)^{-1}K_{\ell} \, y.
\]
Then, for any $x\in\Omega$, by the push-through identity \cite[Fact 2.16.16]{Bernstein} we obtain
\begin{align*}
\mathbb{E}[F_x\mid y]
&=
{\rm k}_{\ell}(x)^\intercal (K_{\ell}+\lambda I)^{-1}y
=
s_X^\lambda(x)\\
&=
{\rm k}_{\ell}(x)^\intercal K_{\ell}^{-1}(K_{\ell}+\lambda I)^{-1}K_{\ell} y
=
{\rm k}_{\ell}(x)^\intercal K_{\ell}^{-1}\hat{y}:=S_X(x), 
\end{align*}
\begin{equation}\label{bho}
\mathrm{Var}[F_x\mid y]
=
\sigma_f^2\Big[\kappa_\ell(x,x)-{\rm k}_{\ell}(x)^\intercal K_{\ell}^{-1}{\rm k}_{\ell}(x)\Big]
=
\sigma_f^2 P_{\kappa_\ell,X}(x)^2.
\end{equation}
Hence, in the presence of Gaussian noise, the described Kriging {prediction $S_X(x)$} reduces to interpolating the
smoothed data $\hat{y}$. In this RKHS interpolation  framework, we observe that $\sigma_f^2$ cancels out in the posterior mean in \eqref{eq:mean-simple}. Consequently, $\sigma_f^2$ would be absorbed into the coefficients $\alpha_i$, $i=1,\dots,N$, and can therefore be ignored as it does not affect the approximant itself. However, in the probabilistic interpretation, $\sigma_f^2$ scales the posterior variance and ensures that uncertainty is expressed in the correct physical units.


\begin{remark}[Universal kriging]
Approximation with positive definite kernels is closely related to the \emph{simple Kriging} framework, in which the mean of the (Gaussian) random field is assumed to be zero. 
In contrast, \emph{conditionally positive definite} kernels arise naturally when the mean is unknown but constrained to lie in a prescribed finite-dimensional function space, such as the space of polynomials. 
This distinction mirrors the functional-analytic theory of reproducing kernel Hilbert spaces (RKHSs) versus their conditional counterparts, often referred to as \emph{native spaces} associated with conditionally positive definite kernels.
\end{remark}

\subsection{Non-stationary kernels}

We recall that a kernel $\kappa$ is stationary if it depends only on the difference of its arguments, i.e. it is a translation invariant kernel and there exists a function $\rho$ such that $\kappa(x,x') = \rho(x - x')$.  A special case of stationary kernels are radial kernels for which there exist a so called radial basis function $\varphi: \mathbb{R}_+ \longrightarrow \mathbb{R}$ such that, for all $x,x' \in \Omega$,
\begin{equation*} 
\kappa_\ell(x,x') = \varphi_\ell(\|x-x'\|_2) = \varphi\left(\frac{r}{\ell}\right), \quad r = \|x-x'\|_2.
\end{equation*} 


Stationary radial kernels encode spatially homogeneous correlation structures: the covariance between observations at locations $x$ and $x'$ depends solely on their relative distance, not on their absolute positions. Although this assumption ensures analytical tractability and leads to well-understood theoretical properties \cite{stein1999interpolation,rasmussen2005gaussian,scheuerer2013interpolation}, it restricts the class of functions that can be effectively modelled. 
In particular, a fixed stationary kernel imposes spatially homogeneous regularity: the process has the same degree of smoothness/roughness throught the domain. Consequently, it cannot adapt to spatially varying regularity, and it does not naturally represent localised features such as jumps or discontinuities. {Non-stationary} kernels relax these constraints by allowing the covariance structure to depend explicitly on the locations $x$ and $x'$. 

A straightforward yet meaningful non-stationary kernel is the linear kernel $\kappa(x,x') = x^\intercal x'$, $x,x' \in \Omega$. Moreover several broad classes of non-stationary constructions have been developed in the literature and summarized below. Common approaches introduce spatially varying amplitude on a stationary baseline through a covariance function of the form
\begin{equation*}
\phi_{\ell,f,n}(x,x') = \sigma_f(x) \, \sigma_f(x') \, \kappa_\ell(x,x') + \sigma_n^2 \delta_{xx'},
\end{equation*}
where $\kappa_\ell$ is a stationary kernel and $\sigma_f(\cdot)$ is a positive function encoding local variability \cite{paciorek2006environmetrics,higdon1998processconv}. In literature $\sigma_f(x) \, \sigma_f(x') \, \kappa_\ell(x,x')$ are also known  as \textit{amplitude modulation} kernels. They preserve the correlation structure of $\kappa_\ell$ while modulating the marginal variance:
\[
\mathrm{Var}[F_x] = \phi_{\ell,f,n}(x,x) = \sigma_f^2(x) + \sigma_n^2.
\]
Such kernels are particularly effective for heteroscedastic data where the signal amplitude varies systematically across the domain.

Other classes of non-stationarity kernels can be induced by letting varying the length scale parameter. Among them, we remind  the so-called Gibbs kernel \cite{paciorek2006environmetrics}  defined by
\begin{equation}\label{eq:gibbs}
\kappa_{\ell(x,x')}(x,x') =\sqrt{\frac{2\ell(x)\ell(x')}{\ell(x)^2 + \ell(x')^2}} \, \varphi\!\left( \frac{\|x - x'\|_2}{\ell_{\mathrm{eff}}(x,x')} \right),
\end{equation}
where $\ell: \mathbb{R}^d \longrightarrow (0, +\infty)$ is a spatially varying length scale and $\ell_{\mathrm{eff}}(x,x') = \sqrt{\frac{\ell(x)^2 + \ell(x')^2}{2}}$. This allows the process to exhibit smooth, slowly varying behavior in some regions (large values of $\ell(x)$) and rapid fluctuations in others (small values of $\ell(x)$). 

Moreover, Paciorek and Schervish \cite{paciorek2004nonstationary} introduced a broad class of non-stationary covariance functions of the form
\begin{equation}\label{eq:ps-kernel}
\kappa^{\Sigma}(x,x')
=
\frac{
|\Sigma(x)|^{1/4} |\Sigma(x')|^{1/4}
}{
\big|\tfrac{1}{2}(\Sigma(x)+\Sigma(x'))\big|^{1/2}
}
\,
\varphi_{\Sigma}(x,x')
\end{equation}
where 
\begin{equation*}
    \varphi_{\Sigma}(x,x') = \varphi\!\left(
\sqrt{
(x-x')^\intercal
\Big(\tfrac{1}{2}(\Sigma(x)+\Sigma(x'))\Big)^{-1}
(x-x')
}
\right).
\end{equation*}
Here $\varphi$ is a radial kernel and for $x \in \Omega,$
$\Sigma(x)\in\mathbb{R}^{d\times d}$ is a smoothly varying field of symmetric positive definite matrices. Note that the Gibbs kernel represents a particular case of $\kappa^\Sigma$ in the one-dimensional setting if the Gaussian is chosen as underlying radial function.

Non-stationarity may also be induced through a spatial deformation or a warping map $g: \Omega \to \Omega'$, being $\Omega' \subset \mathbb{R}^d$, leading to kernels of the form \cite{sampson1992nonparametric,schmidt2003deformations}
\begin{equation}\label{eq:warped}
\kappa^g_\ell(x,x') = \varphi_\ell(\|g(x) - g(x')\|_2),\quad x,x' \in \Omega.
\end{equation}
The map $g$ encodes the geometric structure of non-stationarity, effectively stretching or compressing distances in different regions of the domain. This perspective connects naturally to Riemannian geometry, where the kernel is interpreted as being stationary with respect to a spatially varying metric \cite{borovitskiy2020matern}.

In the next section, we show that {variably scaled kernels defined in \cite{bozzini2015interpolation}} naturally introduce non-stationarity.

\section{Variably scaled kernels in a statistical perspective} \label{VSK-GP}
Letting $\psi:\Omega \to \mathbb{R}^q,$ $q \ge 1,$ be a \emph{scaling function}, and  $\Psi(x)=(x,\psi(x)),$  a VSK is defined as 
\begin{equation}\label{eq:vsk}
\kappa^\Psi_\ell(x,x') = \varphi_\ell(\|\Psi(x) - \Psi(x')\|_2), \quad x,x' \in \Omega.
\end{equation}
Since $\Psi$ is injective, the (strictly) positive definiteness is preserved by pullback, and the RKHSs of $\kappa_\ell$ and $\kappa^\Psi_\ell$ are isometrically isomorphic. 
The following geometric property
\begin{equation}\label{eq:dist}
\|{\Psi}(x)-\Psi(x')\|_2^2
\ge \|x-x'\|_2^2,\quad x,x'\in\Omega
\end{equation}
was proven in \cite{bozzini2015interpolation} and will be useful later.
It follows that Euclidean distances never decrease under the map $\Psi.$ This fact  has been used to construct the scaling function $\psi$ in order to improve the stability of the interpolation process (see e.g. \cite{DeMarchi-et-al:2019,rossini2022overview}). 

Another use of $\psi$ is to \textit{mimic} the shape of the target function $f$. {In this case, VSKs significantly improve the recovery quality.} For instance, when $f$ exhibits a jump discontinuity at $x_0$, choosing $\psi$ to be piecewise constant (with a jump at $x_0$) {prevents undesired oscillations in the interpolant near the discontinuity and yields an interpolant that is itself discontinuous, consistently with the target. In this case, we refer to  variably scaled discontinuous kernels (VSDKs) \cite{demarchi2019vsdk,DeMarchi-et-al:2020}. This choice ensures that correlations across the discontinuity are substantially reduced, effectively decoupling the two sides.} {Similarly, for functions with corner-type singularities, $\psi,$ although globally smooth, may be designed to mimic corner behaviour and go rapidly to zero, thus adapting the local correlation length to the reduced regularity of the target $f.$} 

{In general,} points $x$ and $x'$ that are close in Euclidean distance but for which $\psi(x)$ and $\psi(x')$ differ significantly are treated as being far apart in the augmented-dimension space.
In this sense, VSKs provide a structured mechanism for incorporating prior knowledge about the spatial heterogeneity or local irregularity of the target into the approximation model. In the framework of Kriging approximations, this means that we can impose some information about the target to influence the posterior distributions.

{To better understand the implications of this construction, we relate the VSK framework to classical non-stationary kernels and subsequently analyze the associated VSK Kriging variance. }

\subsection{Connections between VSKs and classical non-stationary kernels}\label{sec:theory}

We now establish explicit connections between the VSK construction and the classical non-stationary kernel formulations introduced in the previous section. We observe that VSKs represent a special instance of warped kernels with $\Omega'=\mathbb{R}^{d+q}$ and $g=\Psi$ (see \eqref{eq:warped}). The resulting covariance function 
\begin{equation}\label{eq:vsk-def:cov}
\phi^\Psi_{\ell,f,n}(x,x') = \sigma_f^2 \kappa^{\Psi}_\ell(x,x') + \sigma_n^2 \delta_{xx'},
\end{equation}
is non-stationary in the original coordinates but stationary in the embedded space. 

In the case of the linear kernel, the spatial covariance structure can be enriched by VSKs as 
\begin{equation*}
\phi^\Psi_{f,n}(x,x') = \sigma_f^2\big(x^\intercal x' + \psi(x)^\intercal \psi(x')\big)+\sigma_n^2\delta_{xx'},
\end{equation*}
whose marginal variance is indeed
\[
\mathrm{Var}[F_x \mid y]=\phi^\Psi_{f,n}(x, x)=\sigma_f^2\big(\|x\|_2^2 + \|\psi(x)\|_2^2\big)+\sigma_n^2, \quad x \in \Omega.
\] 

Furthermore, {Gaussian} VSKs are related to amplitude modulation in the sense of the following proposition.

\begin{proposition}[Gaussian VSKs and amplitude modulation]
Let $d,q \geq 1$ and consider the squared exponential (Gaussian) kernel $\kappa_\ell$ built on $\varphi_\ell(r) = \exp\!\left(-\frac{1}{2\ell^2}r^2\right)$. Then, letting $\phi^\Psi_{\ell,f,n}(x,x')=\sigma_f^2 \kappa^{\Psi}_\ell(x,x') + \sigma_n^2 \delta_{xx'}$ for $x,x'\in\Omega$, we have that 
\[
\phi^\Psi_{\ell,f,n}(x,x') = \tilde{\sigma}_f(x)\,\tilde{\sigma}_f(x')\, r^\Psi_\ell(x,x')+ \sigma_n^2 \delta_{xx'},
\]
where
\[
r^\Psi_\ell(x,x') := \exp\!\bigg(\frac{1}{\ell^2} \langle \psi(x), \psi(x')\rangle \bigg) \, \kappa_\ell(x,x'),
\]
and
\[\quad \tilde{\sigma}_f(x) = \sigma_f \exp\!\left(-\frac{1}{2\ell^2}\|\psi(x)\|_2^2\right),
\]
\end{proposition}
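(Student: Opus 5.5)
The claim is a direct algebraic identity, so the plan is to expand the squared distance in the augmented space and factor the Gaussian exponential. By the definition \eqref{eq:vsk} with $\Psi(x)=(x,\psi(x))$, the augmented squared distance splits orthogonally into the original and the scaling components:
\[
\|\Psi(x)-\Psi(x')\|_2^2=\|x-x'\|_2^2+\|\psi(x)-\psi(x')\|_2^2 .
\]
Since $\varphi_\ell(r)=\exp(-r^2/(2\ell^2))$ depends on $r$ only through $r^2$, the VSK factors as
\[
\kappa^\Psi_\ell(x,x')=\kappa_\ell(x,x')\,\exp\!\Big(-\tfrac{1}{2\ell^2}\|\psi(x)-\psi(x')\|_2^2\Big),
\]
where $\kappa_\ell(x,x')=\varphi_\ell(\|x-x'\|_2)$ is the stationary Gaussian kernel on $\Omega$.

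Next, expand the second factor by polarization:
\[
\|\psi(x)-\psi(x')\|_2^2=\|\psi(x)\|_2^2+\|\psi(x')\|_2^2-2\langle\psi(x),\psi(x')\rangle .
\]
The exponential then becomes
\[
\exp\!\Big(-\tfrac{1}{2\ell^2}\|\psi(x)\|_2^2\Big)\exp\!\Big(-\tfrac{1}{2\ell^2}\|\psi(x')\|_2^2\Big)\exp\!\Big(\tfrac{1}{\ell^2}\langle\psi(x),\psi(x')\rangle\Big).
\]
Multiplying by $\sigma_f^2=\sigma_f\cdot\sigma_f$ and grouping one $\sigma_f$ with each of the first two factors gives exactly $\tilde\sigma_f(x)\tilde\sigma_f(x')$. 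The remaining factor $\exp(\ell^{-2}\langle\psi(x),\psi(x')\rangle)\kappa_\ell(x,x')$ is $r^\Psi_\ell(x,x')$ by definition. The noise term $\sigma_n^2\delta_{xx'}$ is untouched throughout, so it carries over unchanged.

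There is no real obstacle; the only point needing care is the sign and factor bookkeeping in the exponent. The cross term $-2\langle\cdot,\cdot\rangle$ multiplied by $-1/(2\ell^2)$ must give $+\ell^{-2}$, matching the stated $r^\Psi_\ell$. It may also be worth remarking, though it is not needed for the identity, that $r^\Psi_\ell$ is itself positive definite: it is the Schur product of $\kappa_\ell$ with the exponential of the positive semidefinite linear kernel $\langle\psi(x),\psi(x')\rangle$. This is what makes the decomposition a genuine amplitude-modulation form.
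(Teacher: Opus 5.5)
Your proposal is correct and follows the same route as the paper: split the augmented squared distance, expand $\|\psi(x)-\psi(x')\|_2^2$ by polarization, and attach one factor $\sigma_f$ to each $\exp(-\|\psi(\cdot)\|_2^2/(2\ell^2))$. Your exponent bookkeeping is also cleaner than the paper's displayed grouping, which writes $\exp(-\frac{1}{2\ell}\|\psi(x)\|_2)$ where $\exp(-\frac{1}{2\ell^2}\|\psi(x)\|_2^2)$ is meant.
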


\begin{proof}
By definition
\[
\kappa^\Psi_\ell(x,x') = \exp\!\left(-\frac{1}{2\ell^2}\Big(\|x-x'\|_2^2 + \|\psi(x)-\psi(x')\|_2^2\Big)\right),
\]
which leads {directly to the proof}
\begin{align*}
\phi^\Psi_{\ell,f,n}(x,x') = &  \underbrace{\sigma_f \exp\!({-\dfrac{1}{2 \ell} \| \psi(x) \|_2})}_{=:\tilde{\sigma}_f(x)} \, \underbrace{\sigma_f \exp\!({-\dfrac{1}{2 \ell} \| \psi(x') \|_2})}_{=:\tilde{\sigma}_f(x')} \, \underbrace{\exp\!\bigg(\frac{1}{\ell^2} \langle \psi(x), \psi(x')\rangle \bigg) \, \kappa_\ell(x,x')}_{=: r^\Psi_\ell(x,x')}  & \\ &
+\sigma_n^2 \delta_{xx'}.
\end{align*}
\end{proof}

Note that $r^\Psi_\ell$ is a modified kernel that is no longer  stationary but contains a cross-term depending on $\psi(x)$, $\psi(x')$, showing that VSKs encode a richer structure than pure amplitude modulation. 

As far as the connection with Gibbs kernel is concerned, we get the following through local linearization. Some of the proofs of the results reported in this section are shown in Appendix.

\begin{theorem}[Local symmetric expansion of the VSK covariance]\label{prop:vsk-local-symmetric}
{Let us consider a radial VSK as defined in \eqref{eq:vsk} with global length scale $\ell>0$ and 
$\varphi_\ell(r)=\varphi(r/\ell)$ with $\varphi(0)=1$. We assume that $q=1,$ i.e. $\psi:\Omega\to\mathbb{R}$ and that $\psi\in C^1(\Omega).$ Then the VSK covariance $\phi^\Psi_{\ell,f,n}$ defined in \eqref{eq:vsk-def:cov} admits, for all $x,x'\in\Omega$ with $h=x'-x$ satisfying $\|h\|_2=o(1),$ the local approximation}

\begin{equation*}
\phi^\Psi_{\ell,f,n}(x,x')
=
\sigma_f^2\,
\varphi\!\left(
\frac{\sqrt{h^\intercal \,\overline{M}(x,x')\, h}}{\ell}
\right)
+
\sigma_n^2 \delta_{xx'}
+ o(1),\quad x'\to x,
\end{equation*}
where the symmetric metric tensor $\overline{M}(x,x')$ is given by
\begin{align}\label{eq:Mbar-def}
\overline{M}(x,x')
= & \frac{1}{2}(M(x)+M(x')) \\& =
I+\frac{1}{2}\Big(
\nabla\psi(x)\nabla\psi(x)^\intercal
+
\nabla\psi(x')\nabla\psi(x')^\intercal
\Big). \nonumber
\end{align}
\end{theorem}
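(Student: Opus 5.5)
The plan is to start from the definition \eqref{eq:vsk} with $q=1$. Since $\|\Psi(x)-\Psi(x')\|_2^2=\|h\|_2^2+(\psi(x')-\psi(x))^2$, the covariance \eqref{eq:vsk-def:cov} reads
\[
\phi^\Psi_{\ell,f,n}(x,x')=\sigma_f^2\,\varphi\!\left(\frac{\sqrt{\|h\|_2^2+(\psi(x+h)-\psi(x))^2}}{\ell}\right)+\sigma_n^2\delta_{xx'}.
\]
The noise term is untouched throughout. All the work is in comparing the argument of $\varphi$ with $\sqrt{h^\intercal\overline{M}(x,x')h}/\ell$.

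The first step is to linearize $\psi$ around both endpoints. Since $\psi\in C^1(\Omega)$, the mean value theorem or Taylor's theorem, applied along the segment (assumed to lie in $\Omega$, e.g. $\Omega$ locally convex), gives
\[
\psi(x')-\psi(x)=\nabla\psi(x)^\intercal h+o(\|h\|_2)=\nabla\psi(x')^\intercal h+o(\|h\|_2).
\]
Squaring each expansion and averaging the two forms yields
\[
(\psi(x')-\psi(x))^2=\tfrac12\big((\nabla\psi(x)^\intercal h)^2+(\nabla\psi(x')^\intercal h)^2\big)+o(\|h\|_2^2).
\]
The cross terms are $O(\|h\|_2)\cdot o(\|h\|_2)$, which is fine because $\nabla\psi$ is locally bounded. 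Adding $\|h\|_2^2$ produces exactly $h^\intercal\overline{M}(x,x')h+o(\|h\|_2^2)$, with $\overline{M}$ as in \eqref{eq:Mbar-def}. This symmetrization is the reason the statement uses the averaged tensor rather than $M(x)$ alone; the two differ only by $o(\|h\|_2^2)$ anyway, by continuity of $\nabla\psi$.

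The second step is to pass this through $\varphi$. Set $a=\sqrt{h^\intercal\overline{M}h}$ and $b=\sqrt{a^2+o(\|h\|_2^2)}$. Since $\overline{M}\succeq I$, we have $a\ge\|h\|_2$. Then
\[
|b-a|=\frac{|b^2-a^2|}{a+b}\le\frac{o(\|h\|_2^2)}{\|h\|_2}=o(\|h\|_2).
\]
In fact both $a,b\to0$, so by continuity of $\varphi$ at $0$ (with $\varphi(0)=1$) both $\varphi(a/\ell)$ and $\varphi(b/\ell)$ tend to $1$. Hence their difference is $o(1)$, which is all the claimed remainder requires; if $\varphi$ is locally Lipschitz one even gets $o(\|h\|_2)$. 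Multiplying by $\sigma_f^2$ gives the statement.

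I expect the only delicate point to be bookkeeping of uniformity: the little-$o$ terms from the $C^1$ expansions depend on $x$. I would state the result pointwise for fixed $x$, or uniformly on compact subsets where $\nabla\psi$ is uniformly continuous, and require the segment $[x,x']\subset\Omega$. The case $h=0$ is trivial. Note also that the $o(1)$ claim is weak enough that it follows directly from continuity once the arguments vanish; the meaningful content is the identification of the metric $\overline{M}$, which is what the first step establishes, and which shows the VSK acts locally like a Paciorek--Schervish kernel \eqref{eq:ps-kernel} with $\Sigma(x)^{-1}\approx M(x)$, up to the prefactor.
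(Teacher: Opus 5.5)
Your proposal is correct and follows essentially the same route as the paper. Both arguments expand $\psi$ to first order about $x$ and about $x'$, average the squares to obtain $h^\intercal\overline{M}(x,x')h+o(\|h\|_2^2)$, use the difference-of-square-roots bound to get an $o(\|h\|_2)$ error in the distance, and conclude by continuity of $\varphi$ at $0$. Your observation that $\overline{M}\succeq I$ lets you take $c=1$ directly is a minor simplification of the paper's appeal to positive-definiteness constants near the diagonal.
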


\begin{corollary}\label{cor:vsk-d1}
In the same hypothesis of Theorem \ref{prop:vsk-local-symmetric} and assuming that $d=1$, for $x,x'\in\Omega$ with $|x'-x|=o(1)$,
\[
\phi^\Psi_{\ell,f,n}(x,x')
=
\sigma_f^2\,
\varphi\!\left(
\frac{|x-x'|}{\tilde{\ell}^{\Psi}(x,x')}
\right)
+
\sigma_n^2 \delta_{xx'}
+
{o(1),\quad x'\to x,}
\]
where
\[
\ell^{\Psi}(x)
=
\frac{\ell}{\sqrt{1+(\psi'(x))^2}},
\]
and 
\[
\tilde{\ell}^{\Psi}(x,x')
=
\sqrt{\frac{\ell^{\Psi}(x)^2+\ell^{\Psi}(x')^2}{2}}.
\]
\end{corollary}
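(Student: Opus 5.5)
The corollary follows by specializing Theorem~\ref{prop:vsk-local-symmetric} to $d=1$ and rewriting the quadratic form $h^\intercal\overline{M}(x,x')h$ as a ratio $|x-x'|^2/\tilde{\ell}^{\Psi}(x,x')^2$. The plan is to take the local expansion as given, compute the scalar metric $\overline{M}$, and absorb the global length scale $\ell$ into a location-dependent effective length scale. No new analytic estimate is needed: the $o(1)$ remainder is inherited verbatim from the theorem.

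First, with $d=1$ and $q=1$, the gradient $\nabla\psi(x)$ is the scalar $\psi'(x)$, so \eqref{eq:Mbar-def} becomes
\[
\overline{M}(x,x') = 1+\tfrac12\big(\psi'(x)^2+\psi'(x')^2\big).
\]
This is a positive number, so $\sqrt{h^\intercal\overline{M}h}=|x-x'|\sqrt{\overline{M}(x,x')}$ with $h=x'-x$. The argument of $\varphi$ in the theorem is therefore $|x-x'|\big/\big(\ell/\sqrt{\overline{M}(x,x')}\big)$.

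Second, compare $\ell/\sqrt{\overline{M}}$ with $\tilde\ell^{\Psi}$. The definition $\ell^{\Psi}(x)=\ell/\sqrt{1+\psi'(x)^2}$ gives $M(x)=\ell^2/\ell^{\Psi}(x)^2$, hence
\[
\overline{M}(x,x')=\frac{\ell^2}{2}\left(\frac{1}{\ell^{\Psi}(x)^2}+\frac{1}{\ell^{\Psi}(x')^2}\right).
\]
The metric is thus the arithmetic mean of the inverse squared local length scales (a harmonic-type mean). The statement's $\tilde\ell^{\Psi}$, by contrast, is the quadratic mean of the $\ell^{\Psi}$, and the two do not coincide exactly for $x\neq x'$.

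This mismatch is the main obstacle, and I would close it using $x'\to x$ together with the $o(1)$ tolerance. Since $\psi\in C^1$, both $\ell^{\Psi}(x')\to\ell^{\Psi}(x)$ and $\overline{M}(x,x')\to M(x)$. Consequently both $\ell/\sqrt{\overline{M}(x,x')}$ and $\tilde\ell^{\Psi}(x,x')$ converge to $\ell^{\Psi}(x)>0$. The two arguments of $\varphi$ therefore differ by $|h|\cdot o(1)=o(1)$ (indeed both tend to $0$). By continuity of $\varphi$ (implicit in the radial setting; uniform continuity near $0$ suffices), the difference of the $\varphi$-values is $o(1)$. Replacing one form by the other and adding the theorem's $o(1)$ term yields the claimed expansion with $\sigma_n^2\delta_{xx'}$ unchanged.

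If a sharper matching were desired, I would note that the means coincide at $x=x'$ and differ only at second order in $\psi'(x')-\psi'(x)$. At the $o(1)$ level stated, however, the continuity argument is enough.
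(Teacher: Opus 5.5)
Your argument is correct, and its skeleton matches the paper's. You specialize $\overline{M}$ to the scalar $1+\frac12(\psi'(x)^2+\psi'(x')^2)$, compare the resulting scale factor with $\tilde\ell^{\Psi}(x,x')$, and pass the discrepancy through $\varphi$.

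The difference is in the comparison step. The paper sets $a=\psi'(x)^2$, $b=a+\delta$ and Taylor-expands both $\sqrt{1+(a+b)/2}$ and $\ell/\tilde\ell^{\Psi}(x,x')$. It shows both equal $\sqrt{1+a}\,\bigl(1+\frac{\delta}{4(1+a)}\bigr)+O(\delta^2)$, i.e.\ that the arithmetic and harmonic means of $1+a$ and $1+b$ agree to first order in $\delta$. You only observe that both scale factors converge to $\ell^{\Psi}(x)$. That already suffices, since the arguments of $\varphi$ then differ by $o(|h|)$. In fact both arguments tend to $0$, so at the $o(1)$ level the claim is just continuity of $\varphi$ at the origin.

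Your route is shorter and uses exactly the stated hypothesis $\psi\in C^1$. The paper's step $\psi'(x')=\psi'(x)+O(|x'-x|)$ actually requires $\psi'$ to be locally Lipschitz (e.g.\ $\psi\in C^2$), although its conclusion survives with $\delta=o(1)$. What the paper's expansion buys is the quantitative second-order agreement you mention at the end. This can also be read off exactly, without Taylor expansions, from the identity
\[
\frac{u+v}{2}-\frac{2uv}{u+v}=\frac{(u-v)^2}{2(u+v)},\qquad u=1+\psi'(x)^2,\ v=1+\psi'(x')^2 .
\]
That sharper agreement is not needed for the $o(1)$ statement as formulated.
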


Theorem~\ref{prop:vsk-local-symmetric} shows that the VSK covariance is locally governed by a locally varying metric
\[
M(x)=I+\nabla\psi(x)\nabla\psi(x)^\intercal
\]
on the input space $\Omega$. Equivalently, the VSK kernel is locally stationary with respect to the
anisotropic norm
\[
\|h\|_{M(x)}=\sqrt{h^\intercal M(x)h}.
\]
This places VSK covariances within the general class of \emph{locally stationary Gaussian processes}
\cite{silverman1957, priestley1965}, where stationarity holds only infinitesimally and the local
spectral properties vary smoothly with the input location.

In the one-dimensional case ($d=1$), Corollary~\ref{cor:vsk-d1} shows that the local metric reduces to a scalar factor that indicates a similar behaviour to Gibbs kernels (see \eqref{eq:gibbs}), with a locally varying length-scale parameter. The only difference is that the multiplicative factor, that tends to $1$ as $x' \to x$, is absent. Note that the effective length scale $\ell^\Psi(x)$ is not prescribed directly, but emerges from the geometry of the embedding via the gradient of the scaling function $\psi$.

For $d\ge 1$, we can derive the following result, which shows that the VSK covariance is locally equivalent, up to higher-order
terms, to a Paciorek--Schervish kernel.

\begin{corollary}[Local equivalence of VSKs with Paciorek--Schervish kernels]\label{prop:vsk-ps}
Let $\phi^\Psi_{\ell,f,n}$ be a VSK covariance as in
Theorem~\ref{prop:vsk-local-symmetric}, and define the matrix field
\begin{equation}\label{eq:Sigma-vsk}
\Sigma(x)
=
\ell^2
\big(I+\nabla\psi(x)\nabla\psi(x)^\intercal\big)^{-1}= {\ell^2 M(x)^{-1}}.
\end{equation}
Then, for $x,x'\in\Omega$ with $h=x'-x$ satisfying $\|h\|_2=o(1)$,
the quadratic form appearing in the Paciorek--Schervish kernel \eqref{eq:ps-kernel} satisfies
\begin{align}\label{eq:ps-local-limit}
& (x-x')^\intercal
\Big(\tfrac{1}{2}(\Sigma(x)+\Sigma(x'))\Big)^{-1}
(x-x')
=
\frac{1}{\ell^2}\,
h^\intercal \overline{M}(x,x') h
+
{o(\|h\|_2^2),\quad h\to 0},
\end{align}
where
\[
\overline{M}(x,x')
=
I+\frac{1}{2}\big(
\nabla\psi(x)\nabla\psi(x)^\intercal
+
\nabla\psi(x')\nabla\psi(x')^\intercal
\big).
\]
\end{corollary}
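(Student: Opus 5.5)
The plan is to compare the inverse of the averaged covariance, $\big(\tfrac12(\Sigma(x)+\Sigma(x'))\big)^{-1}$, with the averaged metric $\tfrac{1}{\ell^2}\overline{M}(x,x')$. Both matrices agree at $x'=x$, and their difference vanishes as $h\to 0$ because $\psi\in C^1(\Omega)$. Once that matrix difference is shown to be $o(1)$, the quadratic form in $h$ differs by $o(1)\|h\|_2^2=o(\|h\|_2^2)$, which is \eqref{eq:ps-local-limit}.

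\textbf{Step 1.} Substitute \eqref{eq:Sigma-vsk} and factor out $\ell^2$:
\[
\Big(\tfrac12(\Sigma(x)+\Sigma(x'))\Big)^{-1}=\frac{1}{\ell^2}A(x,x')^{-1},\qquad A(x,x'):=\tfrac12\big(M(x)^{-1}+M(x')^{-1}\big).
\]
Since $M(x)=I+\nabla\psi(x)\nabla\psi(x)^\intercal\succeq I$, each $M(x)^{-1}$ is symmetric positive definite with eigenvalues in $(0,1]$. Hence $A(x,x')$ is positive definite and invertible. Sherman--Morrison gives the explicit form
\[
M(x)^{-1}=I-\frac{\nabla\psi(x)\nabla\psi(x)^\intercal}{1+\|\nabla\psi(x)\|_2^2},
\]
which is a continuous function of $x$ because $\nabla\psi$ is continuous.

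\textbf{Step 2.} Let $x'=x+h$ and use the continuity of $\nabla\psi$ at $x$. Then $M(x')=M(x)+o(1)$ and $M(x')^{-1}=M(x)^{-1}+o(1)$ as $h\to0$. Therefore
\[
A(x,x')=M(x)^{-1}+o(1),\qquad A(x,x')^{-1}=M(x)+o(1),
\]
where the second identity uses continuity of matrix inversion near the invertible matrix $M(x)^{-1}$. Continuity also gives $\overline{M}(x,x')=\tfrac12(M(x)+M(x'))=M(x)+o(1)$. Combining these yields $A(x,x')^{-1}-\overline{M}(x,x')=o(1)$ in operator norm.

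\textbf{Step 3.} Bound the quadratic-form error:
\[
\Big|h^\intercal\big(A^{-1}-\overline{M}\big)h\Big|\le \|A^{-1}-\overline{M}\|_2\,\|h\|_2^2=o(\|h\|_2^2).
\]
Dividing by $\ell^2$ gives \eqref{eq:ps-local-limit}. Combined with Theorem~\ref{prop:vsk-local-symmetric}, this shows that the argument of $\varphi$ in the Paciorek--Schervish kernel matches the VSK one to leading order.

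\textbf{Main obstacle.} The difficulty lies in the error order. With $\psi$ merely $C^1$, we only get $o(1)$ matrix errors, which suffices for $o(\|h\|_2^2)$ but not for the sharper $O(\|h\|_2^3)$. If a rate is wanted, I would assume $\psi\in C^2$ and use the harmonic–arithmetic mean expansion. Write $M(x')=M(x)+\Delta$ with $\Delta=O(\|h\|_2)$. Expanding $\big(\tfrac12(M^{-1}+(M+\Delta)^{-1})\big)^{-1}$ to second order shows that it equals $M+\tfrac12\Delta+O(\|\Delta\|^2)$, which matches $\overline{M}$ up to $O(\|h\|_2^2)$ and gives a cubic remainder in $h$. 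The mismatch between the inverse of the mean and the mean of the inverses is the only nontrivial point, and it is second order.
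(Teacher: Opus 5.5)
Your argument is correct and is essentially the paper's: both use continuity of $\nabla\psi$ and of matrix inversion on SPD matrices to get $\big(\tfrac12(\Sigma(x)+\Sigma(x'))\big)^{-1}=\ell^{-2}\big(\overline{M}(x,x')+o(1)\big)$, and then absorb $h^\intercal o(1)\,h$ into $o(\|h\|_2^2)$. The only difference is cosmetic: you compare both matrices directly to $M(x)$, while the paper expands around $M(x)$ and $M(x')$ separately and averages, a step that is unnecessary because $\overline{M}(x,x')=M(x)+o(1)$ anyway; also, in your $C^2$ remark the $O(\|h\|_2^2)$ matrix error gives a remainder of $O(\|h\|_2^4)$, so it is better than cubic.
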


We end up this section by pointing out some important facts about the structure of the covariance kernels arising when approximating discontinuous targets by means of piecewise constant scaling functions. 
\begin{remark}[Discontinuous scaling functions]\label{rem:decoupling}
    For piecewise constant $\psi$ (as used for discontinuities), the gradient $\nabla\psi$ is zero almost everywhere except at the discontinuity. In this case:
\begin{itemize}
\item Within each smooth region, we obtain a standard stationary kernel.
\item Across the discontinuity, when $q=1$, the absolute difference $|\psi(x) - \psi(x')|$ is large even for small $\|x - x'\|_2$, effectively setting $\|\Psi(x) - \Psi(x')\|_2 \gg \|x - x'\|_2$ and causing $\kappa_\psi(x,x') \approx 0$. This mechanism  decouples the correlation structure across the jump. 
\end{itemize}
\vskip 0.3cm
This behaviour is not captured by the analysis developed in this section, which concerns the relationship with Gibbs and Paciorek–Schervish kernels in the case of smoothly varying scaling functions. It instead highlights a distinctive feature of VSKs, namely their ability to naturally encode discontinuities as well as smooth variations in regularity.
\end{remark}

\subsection{A note on the VSK Kriging variance}

The purpose of this section is to investigate the behaviour of the power function (see \eqref{bho}) in the VSK interpolation setting and later the Kriging variance in case of noisy observations. 
Our result provides upper and lower bounds for the VSK power function in terms of the original kernel $\kappa_\ell$. 
\begin{proposition}\label{prop:lower-weyl}
Assume that $\kappa_\ell$ and $\kappa_\ell^\Psi$ are such that $\forall x \in \Omega$ 
$\|{\rm k}_{\ell}^\Psi(x)\|_2 \le \|{\rm k}_{\ell}(x)\|_2$, $\kappa_\ell^\Psi(x,x) = \kappa_\ell(x,x)$, and
\begin{equation*} 
\lambda_{\min}({ K}_{\ell}^\Psi) \ge \lambda_{\min}({ K}_{\ell}), \qquad
\lambda_{\max}({ K}_{\ell}^\Psi) \le \lambda_{\max}({ K}_{\ell}).
\end{equation*}
Then
\begin{equation}\label{eq:lower-weyl}
\kappa_\ell(x,x) - \frac{\|{\rm k}_{\ell}(x)\|_2^2}{\lambda_{\min}({ K}_{\ell})} \le P_{\kappa_\ell^\Psi, X}^2(x) \le \kappa_\ell(x,x) - \frac{\|{\rm k}^\Psi_{\ell}(x)\|_2^2}{\lambda_{\max}({ K}_{\ell})}.
\end{equation}
\end{proposition}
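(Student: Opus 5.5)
Our plan is to work directly from the definition of the power function in \eqref{bho}, written for the VSK:
\[
P_{\kappa_\ell^\Psi,X}^2(x)=\kappa_\ell^\Psi(x,x)-{\rm k}^\Psi_\ell(x)^\intercal (K^\Psi_\ell)^{-1}{\rm k}^\Psi_\ell(x).
\]
We then bound the quadratic form by Rayleigh quotient estimates for the symmetric positive definite matrix $(K^\Psi_\ell)^{-1}$. Its eigenvalues are the reciprocals of those of $K^\Psi_\ell$. Hence, for any vector $v\in\mathbb{R}^N$,
\[
\frac{\|v\|_2^2}{\lambda_{\max}(K^\Psi_\ell)}\le v^\intercal (K^\Psi_\ell)^{-1}v\le \frac{\|v\|_2^2}{\lambda_{\min}(K^\Psi_\ell)}.
\]
We apply this with $v={\rm k}^\Psi_\ell(x)$ and use the assumption $\kappa_\ell^\Psi(x,x)=\kappa_\ell(x,x)$ to replace the diagonal term.

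For the lower bound on $P^2$, we need an upper bound on the quadratic form. We first bound it by $\|{\rm k}^\Psi_\ell(x)\|_2^2/\lambda_{\min}(K^\Psi_\ell)$. We then use $\|{\rm k}^\Psi_\ell(x)\|_2\le\|{\rm k}_\ell(x)\|_2$ to enlarge the numerator, and $\lambda_{\min}(K^\Psi_\ell)\ge\lambda_{\min}(K_\ell)>0$ to shrink the denominator. This gives $\|{\rm k}_\ell(x)\|_2^2/\lambda_{\min}(K_\ell)$, and subtracting from $\kappa_\ell(x,x)$ yields the left inequality in \eqref{eq:lower-weyl}.

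For the upper bound on $P^2$, we need a lower bound on the quadratic form. We take $\|{\rm k}^\Psi_\ell(x)\|_2^2/\lambda_{\max}(K^\Psi_\ell)$ and use $\lambda_{\max}(K^\Psi_\ell)\le\lambda_{\max}(K_\ell)$ to replace the denominator by the larger quantity $\lambda_{\max}(K_\ell)$. This decreases the fraction. The numerator stays as $\|{\rm k}^\Psi_\ell(x)\|_2^2$, which is why the VSK vector appears on the right-hand side of \eqref{eq:lower-weyl}.

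We need positive definiteness of $K_\ell$ and $K^\Psi_\ell$ so that the inverses exist and the eigenvalues are positive. This follows from the strict positive definiteness preserved under the injective map $\Psi$, as noted after \eqref{eq:vsk}, together with pairwise distinct data.

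The argument is elementary. The main point of care is the direction of each monotonicity step: the norm comparison may only be used in the lower bound, and each eigenvalue hypothesis only in the matching bound. That asymmetry explains the mixed form of \eqref{eq:lower-weyl}.
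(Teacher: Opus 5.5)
Your proposal is correct and matches the paper's proof: Rayleigh--Ritz bounds on ${\rm k}^\Psi_\ell(x)^\intercal (K^\Psi_\ell)^{-1}{\rm k}^\Psi_\ell(x)$ via $\lambda_{\min}(K^\Psi_\ell)$ and $\lambda_{\max}(K^\Psi_\ell)$, followed by the same monotonicity steps, with each hypothesis used in the matching direction. A small aside: once $K_\ell$ is positive definite, the hypothesis $\lambda_{\min}(K^\Psi_\ell)\ge\lambda_{\min}(K_\ell)>0$ already makes $K^\Psi_\ell$ invertible, so you do not need the pullback argument for that step.
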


\begin{proof}
Since $\kappa_\ell^\Psi(x,x) = \kappa_\ell(x,x)$, by Rayleigh--Ritz we have
\begin{align*}
P_{\kappa_\ell^\Psi, X}^2(x) &= \kappa_\ell^\Psi(x,x) - {\rm k}_{\ell}^\Psi(x)^\intercal ({ K}_{\ell}^\Psi)^{-1} {\rm k}_{\ell}^\Psi(x) \\
&\ge \kappa_\ell(x,x) - \frac{\|{\rm k}_{\ell}^\Psi(x)\|_2^2}{\lambda_{\min}({ K}_{\ell}^\Psi)} \\
&\ge \kappa_\ell(x,x) - \frac{\|{\rm k}_{\ell}(x)\|_2^2}{\lambda_{\min}({ K}_{\ell})},
\end{align*}
and similarly,
\begin{align*}
P_{\kappa_\ell^\Psi, X}^2(x) &= \kappa_\ell^\Psi(x,x) - {\rm k}_{\ell}^\Psi(x)^\intercal ({K}_{\ell}^\Psi)^{-1} {\rm k}_{\ell}^\Psi(x)  \\
&\le \kappa_\ell(x,x) - \frac{\|{\rm k}_{\ell}^\Psi(x)\|_2^2}{\lambda_{\max}({ K}_{\ell}^\Psi)} \\
&\le \kappa_\ell(x,x) - \frac{\|{\rm k}_{\ell}^\Psi(x)\|_2^2}{\lambda_{\max}({ K}_{\ell})}.
\end{align*}
\end{proof}

The hypotheses in Proposition \ref{prop:lower-weyl} are satisfied, e.g., for Gaussian and Matérn $C^0$ kernels. Moreover {thanks to \eqref{eq:dist}}, many commonly used radial strictly positive definite kernels, such as the Matérn family, inverse generalized multiquadric, and Wendland's kernels, satisfy the entry-wise inequality
\[
(K^\Psi_\ell)_{ij} \le ({ K}_{\ell})_{ij}, \qquad i,j = 1,\dots,n.
\]
Since both $K^\Psi_\ell$ and ${ K}_{\ell}$ have nonnegative entries, ${K}_{\ell} - K^\Psi_\ell$ is elementwise nonnegative. By the monotonicity of the spectral radius for nonnegative matrices \cite{berman1979nonnegative}, we have
\begin{equation*}
\lambda_{\max}(K^\Psi_\ell) \le \lambda_{\max}({K}_{\ell}).
\end{equation*}
Consequently, the upper bound in \eqref{eq:lower-weyl} also holds for the aforementioned classes of kernels.

The result shown in Proposition \ref{prop:lower-weyl} holds true also when considering the matrix $K_{\ell}+\lambda I$ and $K^{\Psi}_{\ell}+\lambda I$; indeed the effect of adding $\lambda I$ is that of shifting both spectra by a positive scalar $\lambda$.

\section{Numerical examples}\label{sec:examples}

In this section, we show various computational experiments to analyze the behavior of VSKs in the context of Gaussian processes. 

\begin{itemize}
    \item 
    In Section \ref{sec:results_discontinuous}, we test VSKs in the reconstruction of a one-dimensional function that presents a jump discontinuity, varying the number of nodes and the observation noise.
    \item 
    In Section \ref{sec:results_weierstrass}, we reconstruct a two-dimensional truncated Weierstrass function, and we analyze the effectiveness of VSKs varying the scaling function.
    \item 
    In Section \ref{sec:results_corner}, we consider a target function presenting a corner point, and test the effectiveness of our approach in including prior information in the model.
    \item 
    In Section \ref{sec:results_others}, following the theoretical analysis in Section \ref{sec:theory}, we perform some numerical tests to compare VSKs and \textit{corresponding} Gibbs/Paciorek kernels.
\end{itemize}

The experiments are carried out in \textsc{Matlab} on a 11th Gen Intel(R) Core(TM) i7-1165G7 @ 2.80GHz. The hyperparameters for both the classical kernels and VSKs, unless otherwise stated, are \emph{optimized} via Maximum Likelihood Estimation (MLE) \cite{McCourt2017917}  available in the {\tt fitrgp.m} routine. 

To assess the performance of the methods used to approximate the underlying signal $f$, letting $\Xi=\{\xi_1,\dots,\xi_M\}$ be a set of evaluation points in $\Omega$, we will compute the Root Mean Square Error (RMSE) and Maximum Absolute Error (MAE) (cf. \eqref{eq:mean-simple})
\begin{equation*}
\mathrm{RMSE} =
\sqrt{\frac{1}{M}\sum_{i=1}^{M}\left(\mathbb{E}[F_{\xi_i}\mid y] - f(\xi_i)\right)^2},\quad \mathrm{MAE} =
\max_{1 \le i \le M} \left|\mathbb{E}[F_{\xi_i}\mid y] - f(\xi_i)\right|.
\end{equation*}
Moreover, we will consider the posterior variance (cf. \eqref{eq:post_variance}) and calculate
\begin{equation*}
    \mathrm{avg\;std}=\frac{1}{M}\sum_{i=1}^{M}\mathrm{Var}[F_{\xi_i}\mid y],\quad  \mathrm{max\;std}=\max_{1 \le i \le M} \mathrm{Var}[F_{\xi_i}\mid y].
\end{equation*}

\subsection{Approximating a discontinuous function}\label{sec:results_discontinuous}

We consider the target function defined on $\Omega=[0,1]$ as
\[
f(x)
=
\begin{cases}
\dfrac{\cos(14\pi(x+0.5))}{2x+0.5} + (x-0.5)^4+3, & x<0.5,\\[1ex]
\dfrac{\cos(14\pi(x+0.5))}{2x+0.5} + (x-0.5)^4, & x\ge 0.5,
\end{cases}
\]
which exhibits a jump discontinuity at $x_0=0.5$.  For the VSK case, we take the scaling function
\[
\psi(x)=\mathbf{1}_{[x_0,1]}(x),
\]
so that we impose information on the covariance kernel about the discontinuity location.

\subsubsection{Prior-driven behavior in the variably scaled setting}

We first investigate the behavior of the proposed VSK model in a severely data-limited regime, with fixed hyperparameters and a small number of training points, by considering a Matérn $C^2$ kernel. The goal of this experiment is to isolate the effect of the kernel prior, independently of any data-driven hyperparameter tuning. We fix the parameters $\ell=1$, $\sigma_f=8$, $\sigma_n=0$, (interpolation case), and we compare the classical stationary process model with its VSK counterpart. The training set consists of $N=6$ uniformly distributed points in $\Omega$, and predictions are evaluated on a grid of $500$ equispaced points. 

Figure~\ref{fig:approx_jump}  shows the resulting posterior mean with corresponding confidence intervals (related to the posterior variance) in the stationary and VSK setting. Although the number of training nodes is limited, it still allows us to draw some simple but useful observations. Indeed, note that in the VSK case the posterior variance is significantly higher in a neighborhood of $x_0=0.5$, reflecting the information encoded in the embedded space provided by the chosen $\psi$. This is also reflected in Figure \ref{fig:real_jump}, where several realizations from the prior and posterior processes are shown. In the VSK framework, the jump location is explicitly incorporated at the prior stage. In Table \ref{tab:comparison_fixed_sparse}, some metrics further highlight the larger uncertainty provided by VSKs, especially nearby $x_0$, and similar overall accuracy. Finally, in Figure \ref{fig:cov_jump} we display the covariance matrices corresponding to the stationary and the VSK framework, where the decoupling effect provided by the discontinuous covariance is evident (see Remark \ref{rem:decoupling}).

\begin{figure}[H] 
\centering 
\includegraphics[width=0.35\linewidth]{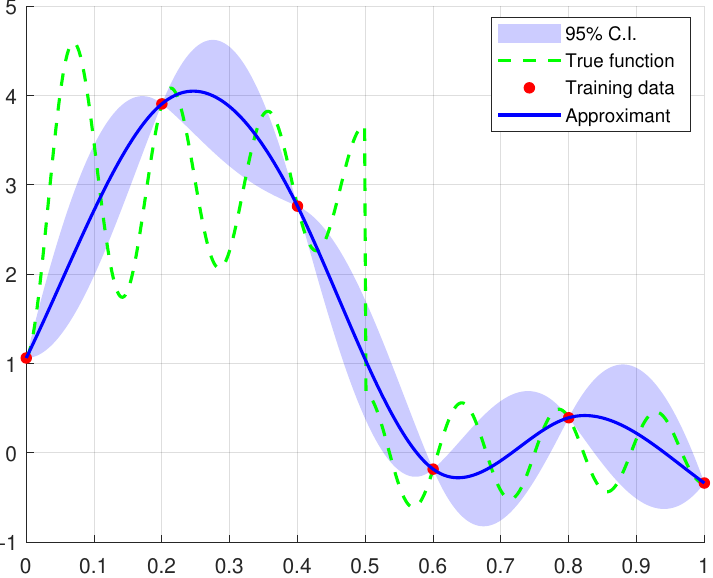}
\hskip 0.5 cm
\includegraphics[width=0.35\linewidth]{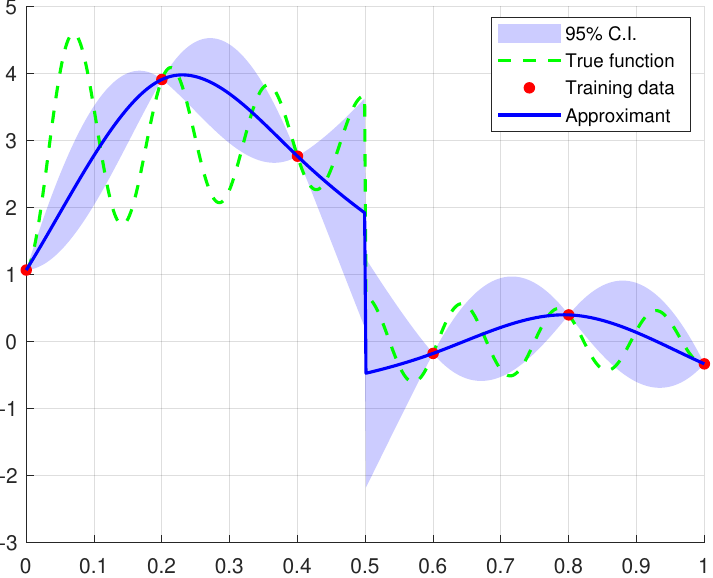} 
\caption{Stationary case (left) and VSK approach (right). The training data consist of $N=6$ equispaced nodes.} \label{fig:approx_jump} 
\end{figure}

\begin{figure}[H] 
\centering 
\includegraphics[width=0.7\linewidth]{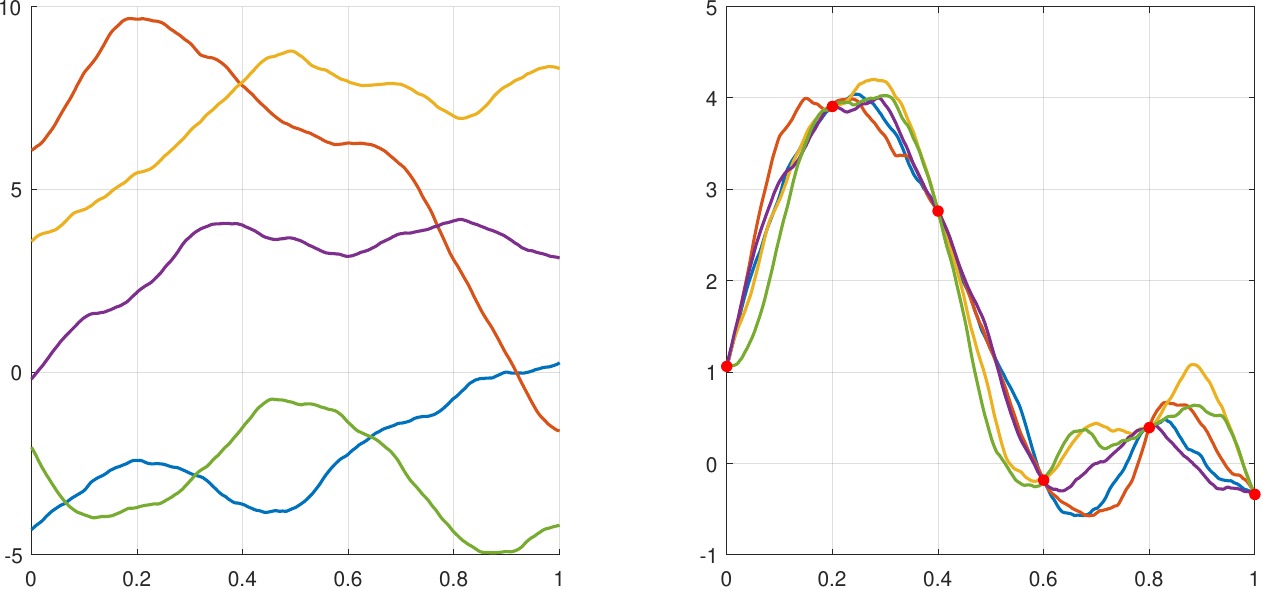} 
\includegraphics[width=0.7\linewidth]{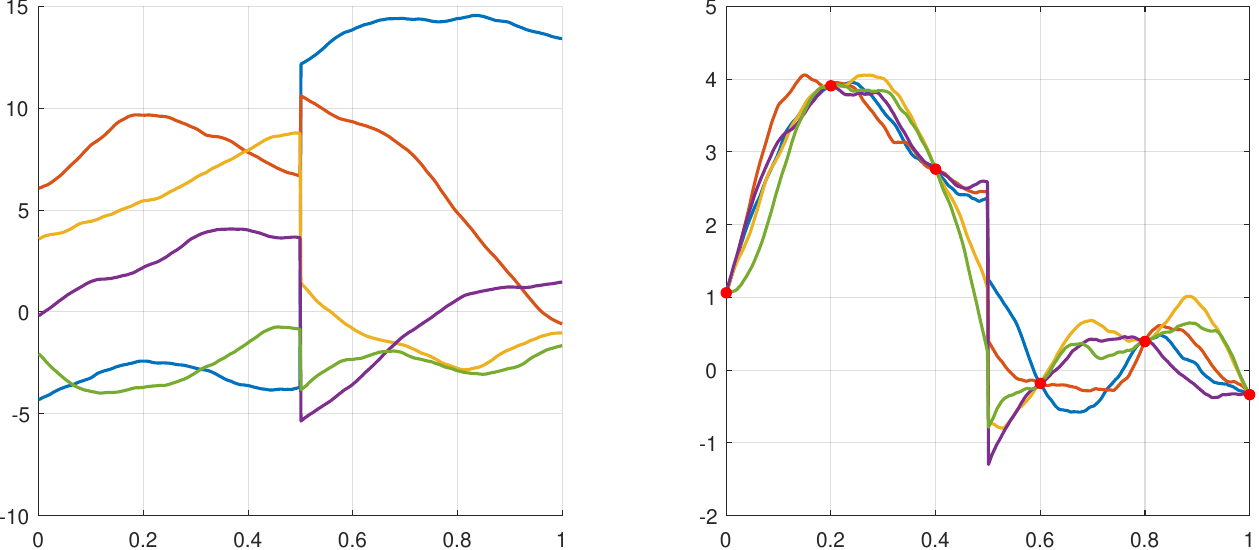}
\caption{Realization of the GPs using the prior (left) and posterior (right) covariance: stationary case (top) and VSK approach (bottom).} 
\label{fig:real_jump} 
\end{figure}

\begin{figure}[H] 
\centering 
\includegraphics[width=0.35\linewidth]{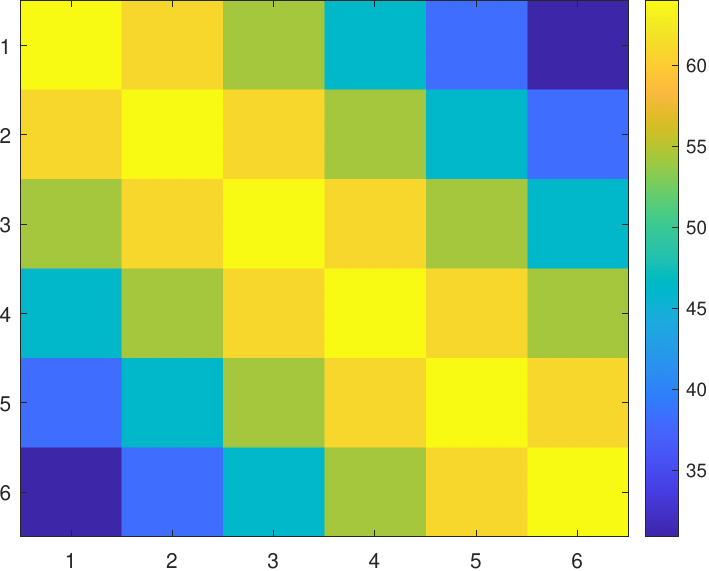} 
\hskip 0.5 cm
\includegraphics[width=0.35\linewidth]{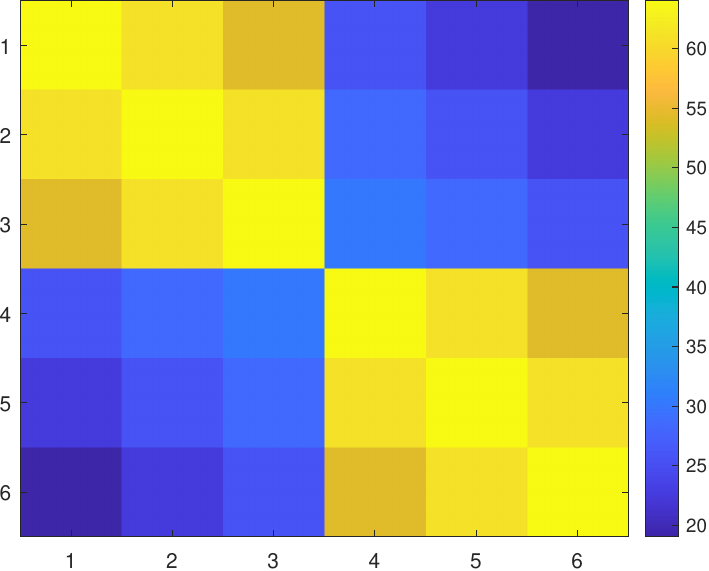}
\caption{Training matrices in the stationary (left) and VSK (right) case.} 
\label{fig:cov_jump} 
\end{figure}

\begin{table}[H]
\centering
\caption{Comparison between the stationary Matérn $C^2$ kernel and the VSK kernel with fixed hyperparameters $\ell=1$, $\sigma_f=8$, $\sigma_n=0,$ and $N=6$ uniform nodes.}
\label{tab:comparison_fixed_sparse}
\begin{tabular}{lcc}
\toprule
 & \textbf{Standard} & \textbf{VSK} \\
\midrule
RMSE                     & $9.1728\times 10^{-1}$ & $\mathbf{8.4409\times 10^{-1}}$ \\
Maximum error            & $2.6058\times 10^{0}$ & $\mathbf{2.3776\times 10^{0}}$ \\
Average posterior std.   & $\mathbf{2.3022\times 10^{-1}}$ & $2.8207\times 10^{-1}$ \\
Maximum posterior std.   & $\mathbf{3.7459\times 10^{-1}}$ & $8.7057\times 10^{-1}$ \\
\bottomrule
\end{tabular}
\end{table}

\subsubsection{Comparison with MLE optimized hyperparameters}

Here, we take the same target function, domain and scaling map as in the previous experiment, but we experiment with a Matérn $C^4$ kernel where hyperparameters are selected by maximizing the GP marginal likelihood. The objective of this experiment is to assess whether the geometric advantages of VSKs persist when the models are tuned in a data-driven way. 

In the following, we consider $N=\{10+20j\:|\:j=1,\dots,39\}$ Halton points. Moreover, a Gaussian observation noise with standard deviation $0.25$ is added to the data. 
In Figure \ref{fig:jump_convergence}, we show the  obtained results.  

\begin{figure}[h] 
\centering 
\includegraphics[width=0.7\linewidth]{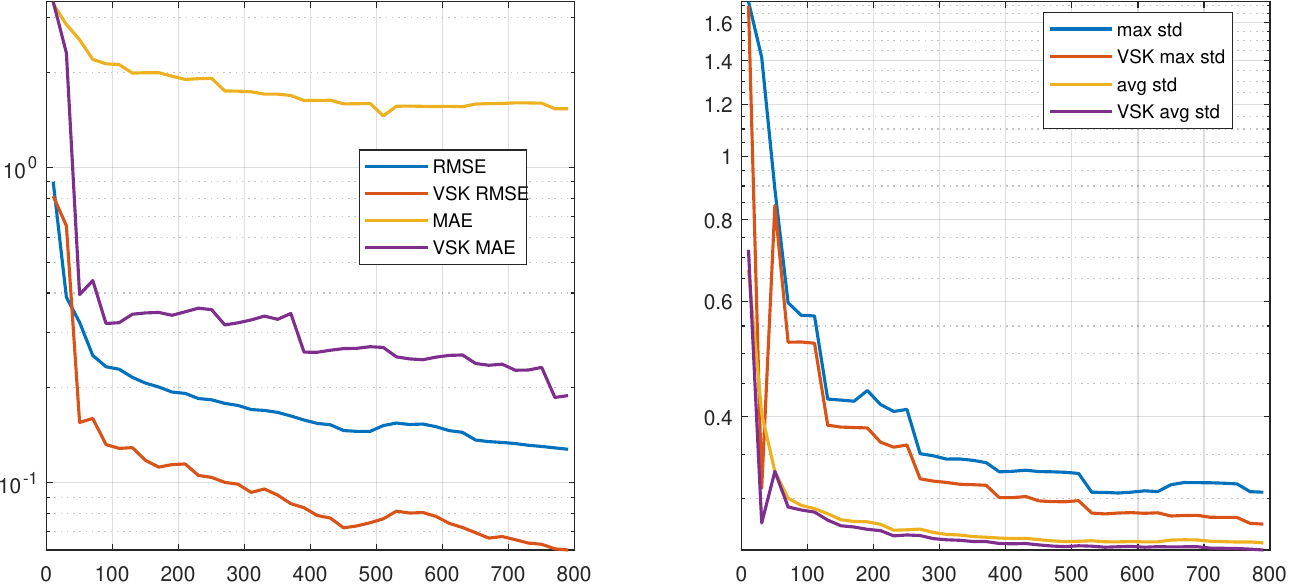} 
\caption{Approximation errors (left) and uncertainty (right) varying $N$ from $10$ to $790$ Halton nodes.} 
\label{fig:jump_convergence} 
\end{figure}

First, we observe that the variably scaled setting leads to more accurate reconstructions and faster error decay than stationary models. Moreover, we note that the standard deviation related to VSK models is analogous to stationary approaches for \textit{small} values of $N$, and then drops as $N$ becomes \textit{large enough}. In Figure \ref{fig:approx_jump_mle_50}, we visualize the reconstructions obtained with $N=27$, while in Figure \ref{fig:approx_jump_mle_250} the results with $N=81$ are depicted. Further details are reported in Tables \ref{tab:comparison_N50} and \ref{tab:comparison_N250}. When only limited information is provided, as in Figure \ref{fig:approx_jump_mle_50}, the prior allows a higher uncertainty near the jump point. Then, as soon as a sufficient amount of observations are provided, the encoded prior matches the available observations, and so it significantly reduces the posterior variance near the jump in both cases.

\begin{figure}[H] 
\centering 
\includegraphics[width=0.35\linewidth]{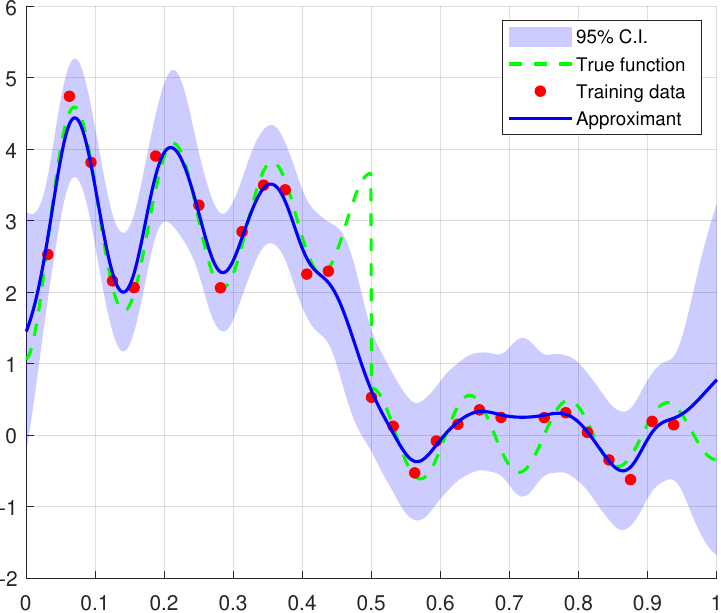} 
\hskip 0.5 cm
\includegraphics[width=0.35\linewidth]{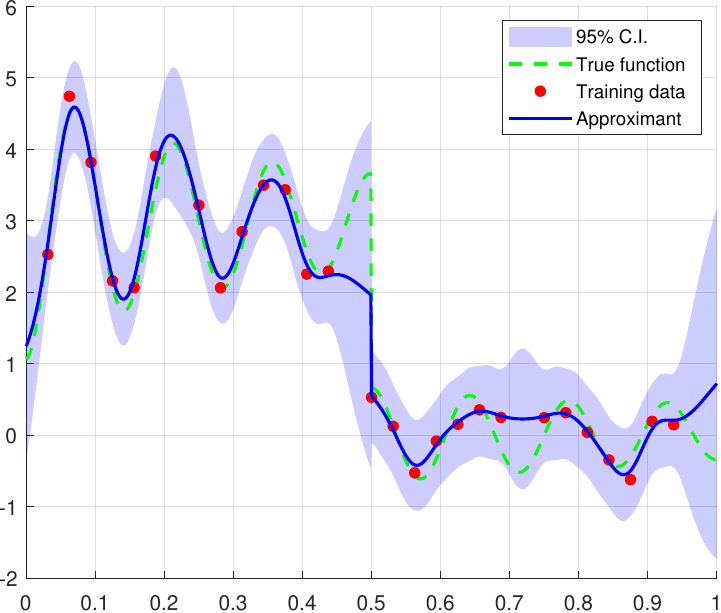} 
\caption{Stationary case (left) and VSK approach (right). The training data consist of $N=27$ Halton nodes.} \label{fig:approx_jump_mle_50} 
\end{figure}

\begin{figure}[H] 
\centering 
\includegraphics[width=0.35\linewidth]{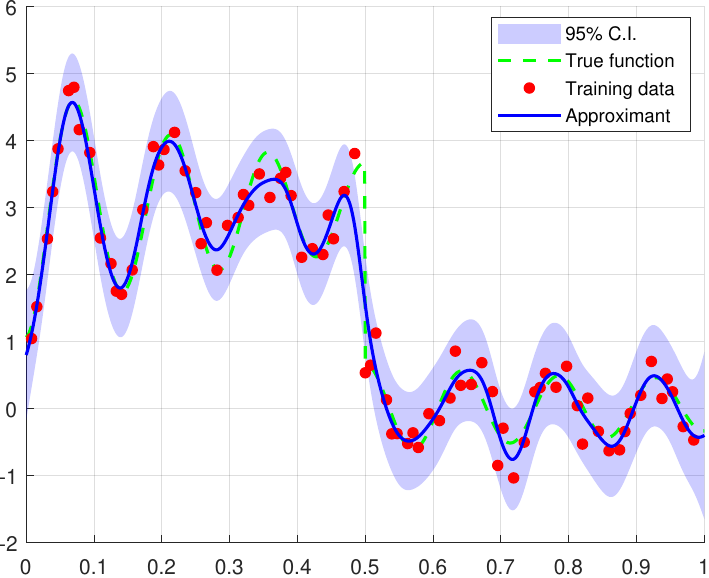} 
\hskip 0.5 cm
\includegraphics[width=0.35\linewidth]{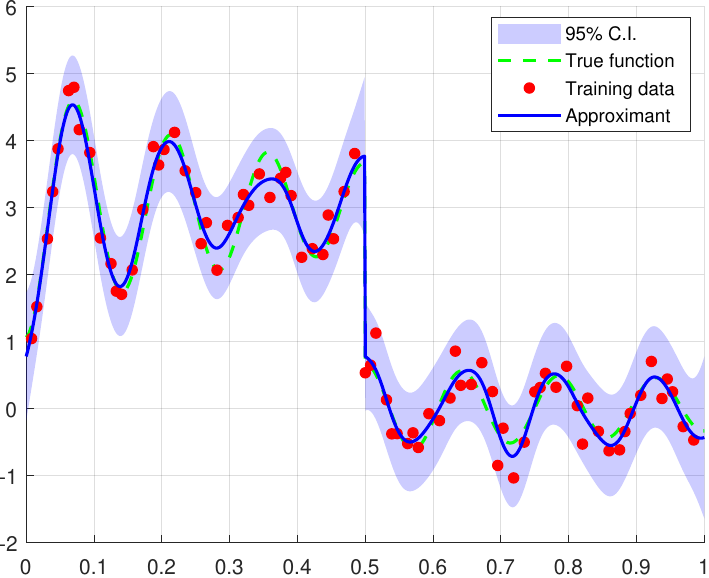} 
\caption{Stationary case (left) and VSK approach (right). The training data consist of $N=81$ Halton nodes.} \label{fig:approx_jump_mle_250} 
\end{figure}

\begin{table}[H]
\centering
\caption{Comparison between stationary kernel and VSK for \(N=27\) training Halton points (MLE-tuned hyperparameters).}
\label{tab:comparison_N50}
\begin{tabular}{lcc}
\hline
\textbf{Metric} & \textbf{Standard} & \textbf{VSK} \\
\hline
Length scale $\ell$      & 0.0650 & 0.0627 \\
Signal variance $\sigma_f$ & 1.4827 & 1.4956 \\
Noise variance $\sigma_n$  & 0.3238 & 0.2428 \\
RMSE                     & $5.3569\times10^{-1}$ & $3.6767\times10^{-1}$ \\
Max error                & $3.0195\times10^{0}$ & $1.7015\times10^{0}$ \\
Average std              & $4.6966\times10^{-1}$ & $4.0266\times10^{-1}$ \\
Max std                  & $1.2539\times10^{0}$ & $1.2502\times10^{0}$ \\
\hline
\end{tabular}
\end{table}

\begin{table}[H]
\centering
\caption{Comparison between stationary kernel and VSK for \(N=81\) training Halton points (MLE-tuned hyperparameters).}
\label{tab:comparison_N250}
\begin{tabular}{lcc}
\hline
\textbf{Metric} & \textbf{Standard} & \textbf{VSK} \\
\hline
Length scale $\ell$      & 0.0614 & 0.0694 \\
Signal variance $\sigma_f$ & 1.6096 & 1.6613 \\
Noise variance $\sigma_n$  & 0.3269 & 0.3321 \\
RMSE                     & $2.4512\times10^{-1}$ & $1.3972\times10^{-1}$ \\
Max error                & $2.0480\times10^{0}$ & $4.1026\times10^{-1}$ \\
Average std              & $3.8423\times10^{-1}$ & $3.8782\times10^{-1}$ \\
Max std                  & $6.3757\times10^{-1}$ & $6.1732\times10^{-1}$ \\
\hline
\end{tabular}
\end{table}
\subsection{Reconstruction of a truncated Weierstrass function with increasing VSK complexity}\label{sec:results_weierstrass}

We consider the reconstruction of a two-dimensional Weierstrass-type function, a classical example of a highly irregular yet continuous signal, used here to further assess the ability of VSKs to incorporate a strong prior.

Let $\Omega=[0,1]^2$. The target function is defined as
\begin{equation*}
f(x_1,x_2)
=
\sum_{k=0}^{K}
a^k
\cos\!\big(\pi b^k x_1\big)
\cos\!\big(\pi b^k x_2\big),
\qquad
a=0.5,\quad b=3,\quad K=12,
\end{equation*}
which corresponds to a truncated two-dimensional Weierstrass function.
Despite the truncation, the resulting surface exhibits fine-scale oscillations and limited regularity. We take {$N=25$ uniform training} points ($5\times 5$ grid) in $\Omega$, while predictions are evaluated on a uniform $50\times50$ grid. No observation noise is considered. We employ the Matérn $C^0$ kernel and estimate all hyperparameters $\ell,\sigma_f,\sigma_n$ via MLE.

To investigate the effect of the scaling map in the VSK approximation, we consider a family of scaling functions of increasing complexity. Specifically, for a given truncation level $K_{\mathrm{VSK}}\in\{0,\dots,12\}$, we define
\begin{equation*}
\psi^{K_{\mathrm{VSK}}}(x_1,x_2)
=
\sum_{k=0}^{K_{\mathrm{VSK}}}
a^k
\cos\!\big(\pi b^k x_1\big)
\cos\!\big(\pi b^k x_2\big).
\end{equation*}
For the case $K_{\mathrm{VSK}}=0$ we set $\psi^{K_{\mathrm{VSK}}}(x_1,x_2)\equiv 0$, which corresponds to a stationary kernel, while by increasing values of $K_{\mathrm{VSK}}$ we progressively incorporate higher-frequency components of the target function into the VSK prior. Note that the extreme case $K_{\mathrm{VSK}}=K$ yields $\psi=f$.

In Figure \ref{fig:weierstrass_convergence}, we report the metrics obtained by varying $K_{\mathrm{VSK}}\in\{0,\dots,12\}$, and in Figure \ref{fig:approx_weierstrass} we display the results achived for some values of $K_{\mathrm{VSK}}$. Moreover, VSKs show robustness against \textit{poor} priors, a fact observed in previous works in the framework of kernel interpolation \cite{DeMarchi-et-al:2020}.

\begin{figure}[H] 
\centering 
\includegraphics[width=0.7\linewidth]{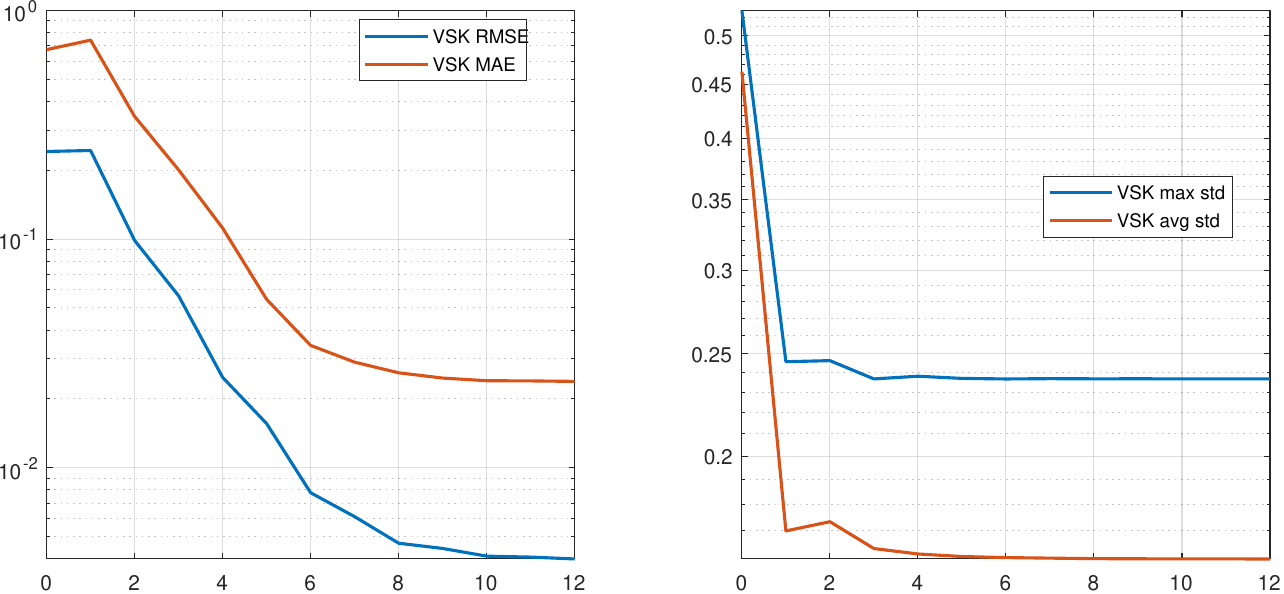} 
\caption{Approximation errors (left) and uncertainty (right) varying $K_{\rm VSK}$ from $0$ (stationary) to $12$.} 
\label{fig:weierstrass_convergence} 
\end{figure}

\begin{figure}[H] 
\centering 
$
\begin{array}{c}
\includegraphics[width=0.35\linewidth]{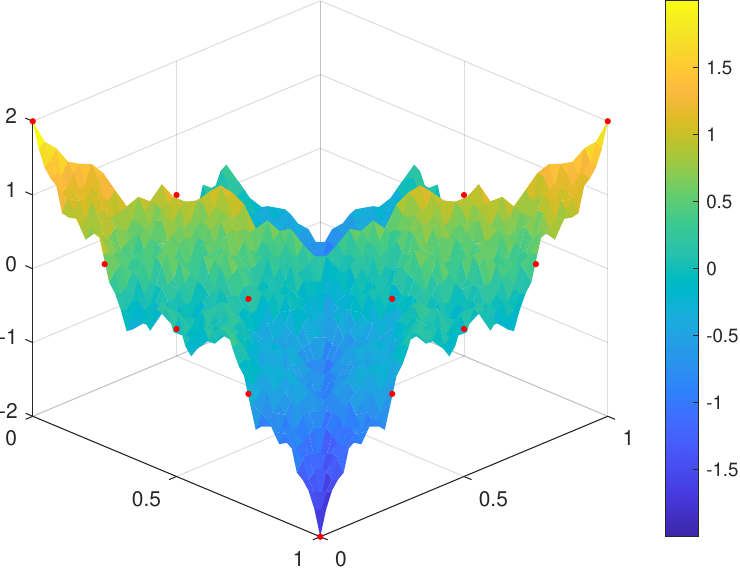}
\includegraphics[width=0.35\linewidth]{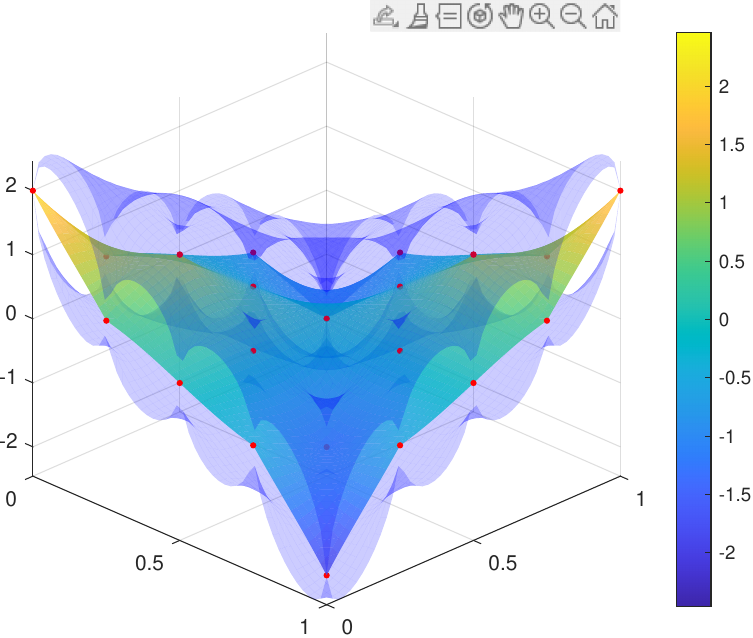}\\
\includegraphics[width=0.35\linewidth]{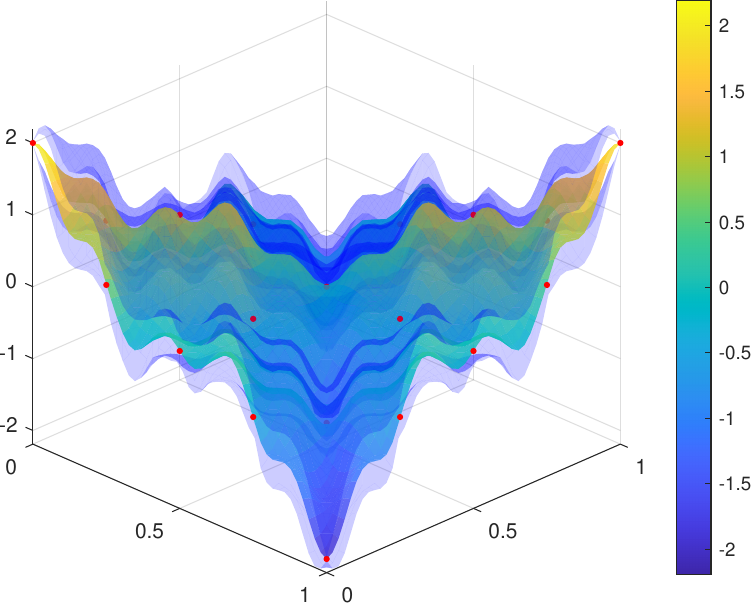}
\includegraphics[width=0.35\linewidth]{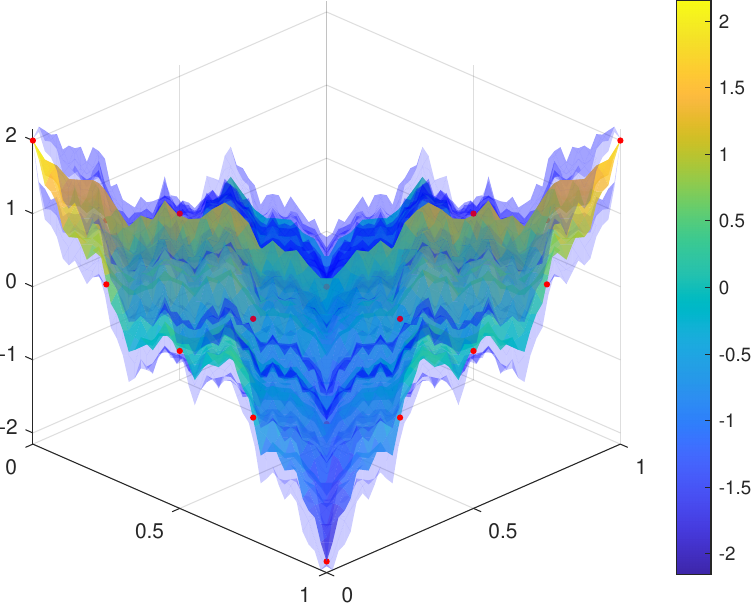}
\includegraphics[width=0.35\linewidth]{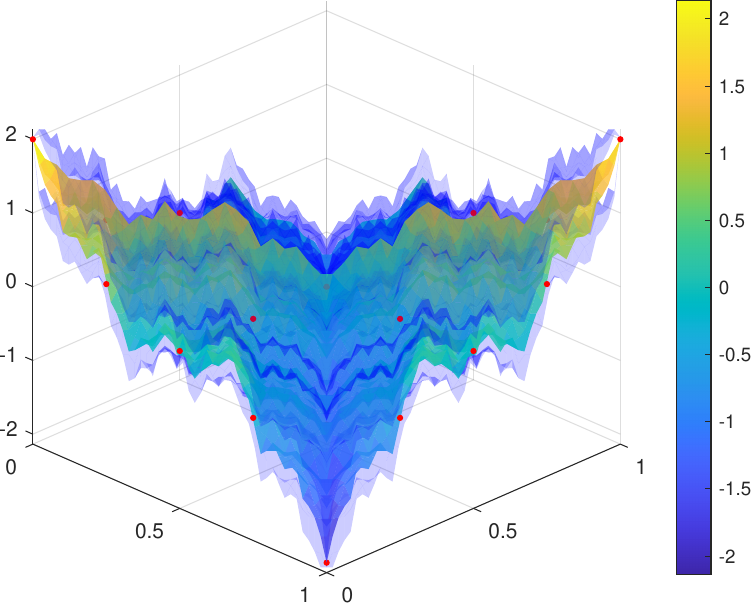}
\end{array}
$
\caption{Approximation of the Weierstrass function, with confidence intervals. Top: the target function (left) and the stationary reconstruction (right). Bottom: VSK reconstructions with $K_{\rm VSK}=2$ (left), $K_{\rm VSK}=4$ (center) and $K_{\rm VSK}=12$ (right).}
\label{fig:approx_weierstrass} 
\end{figure}

\subsection{Approximating a function with a corner}\label{sec:results_corner}

We consider the target function defined on $\Omega=[0,1]$ as
\[
f(x) =
\begin{cases}
\exp\!\left(-\dfrac{1}{2}\left(5(2x-1)-0.5\right)^2\right), & x \ge 0.5, \\[6pt]
\exp\!\left(-\dfrac{1}{2}\left(5(2x-1)+0.5\right)^2\right), & x < 0.5.
\end{cases}
\]
which has a corner point at $x_0=0.5$. We take a Gaussian kernel and, for the VSK case, the scaling function
\begin{equation*}
\psi(x)=\left\{
\begin{array}{ll}
1-\dfrac{3}{2}\dfrac{|x-x_0|}{R}+\dfrac{1}{2}\dfrac{|x-x_0|^3}{R^3}, & |x-x_0|<R,\\
0, & \mathrm{otherwise,}
\end{array}
\right.
\end{equation*}
with $R=0.5$. This scaling function was proposed in \cite{Rossini:2018} for approximating targets with discontinuous  gradients.

In the following, we consider $N=\{11+20j\:|\:j=1,\dots,39\}$ uniform points. With this choice of nodes, note that the corner point $x_0$ is always sampled in the training set. 
In Figure \ref{fig:cusp_convergence}, we show the {obtained} results. 

\begin{figure}[H] 
\centering 
\includegraphics[width=0.7\linewidth]{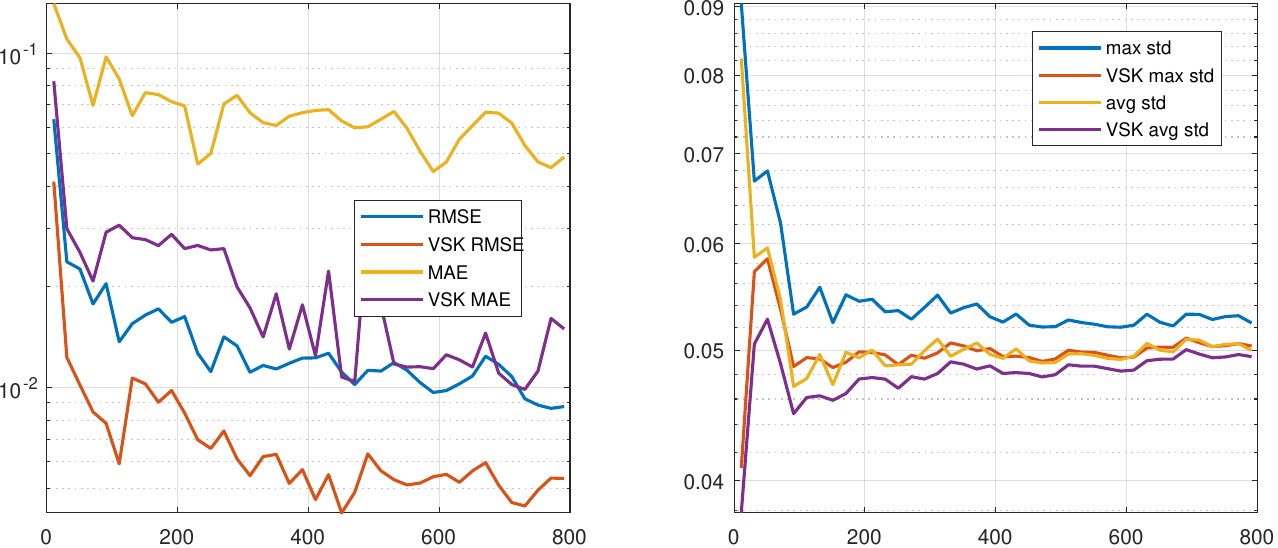} 
\caption{Approximation errors (left) and uncertainty (right) varying $N$ from $11$ to $791$ Halton nodes.}  
\label{fig:cusp_convergence} 
\end{figure}

As in the previous tests, the VSK setting provides more accurate reconstructions than the stationary case. As far as the uncertainty is concerned, we observe a different phenomenon with respect to the jump function case. Here, VSKs are less uncertain than stationary models, especially if few data are considered. This fact can be explained as follows. The corner point is always sampled in the considered training datasets. Thanks to its prior information encoded by $\psi$, VSK models immediately \textit{believe} the corner observation. On the other hand, stationary models need more evidence to stop treating the corner node as a noisy observation. To better highlight this aspect, in Figure \ref{fig:approx_cusp} we show the reconstructions obtained for $N=21$. Furthermore, in Figure \ref{fig:approx_cusp_even}, we show the approximants produced by setting $N=20$, where the effect of the prior is evident also in the case where the corner node is not sampled. The covariance matrices corresponding to the approximations shown with $N=20,21$ are reported in Figure \ref{fig:covariance_cusp}.

\begin{figure}[H] 
\centering 
\includegraphics[width=0.35\linewidth]{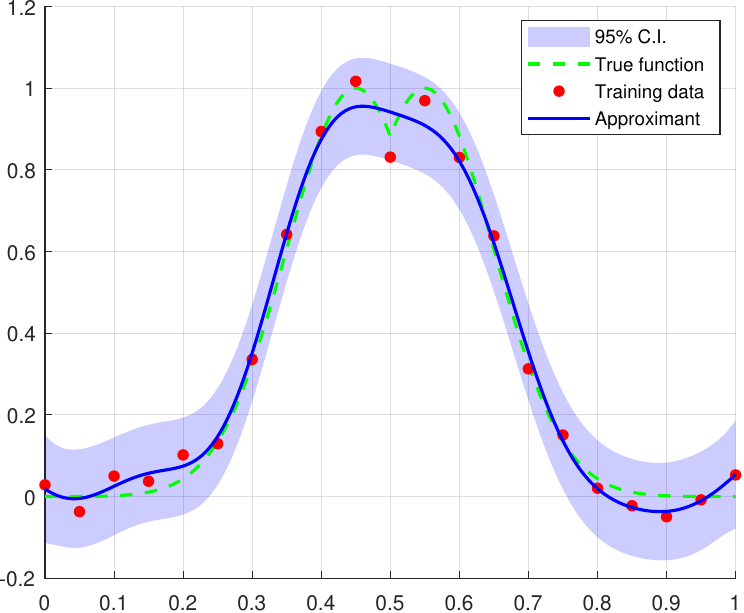}
\hskip 0.5 cm
\includegraphics[width=0.35\linewidth]{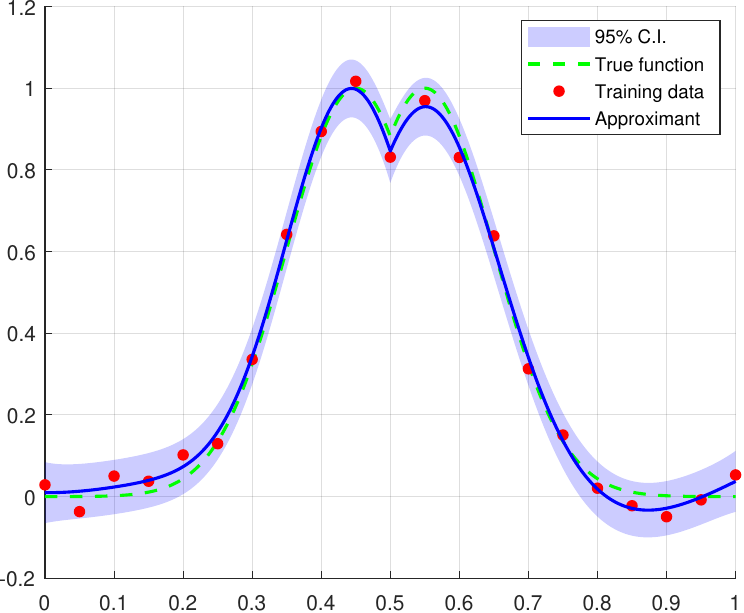} 
\caption{Stationary case (left) and VSK approach (right). The training data consist of $N=21$ uniform nodes.} \label{fig:approx_cusp} 
\end{figure}

\begin{figure}[H] 
\centering 
\includegraphics[width=0.35\linewidth]{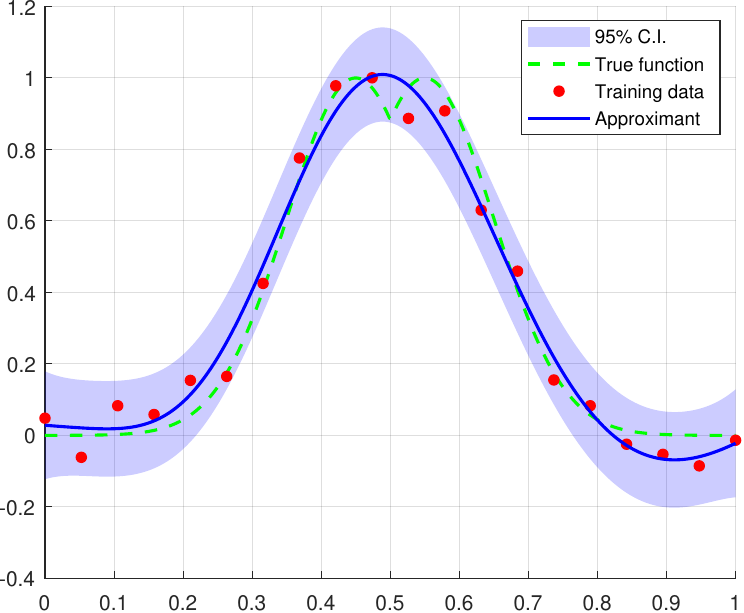} 
\hskip 0.5 cm
\includegraphics[width=0.35\linewidth]{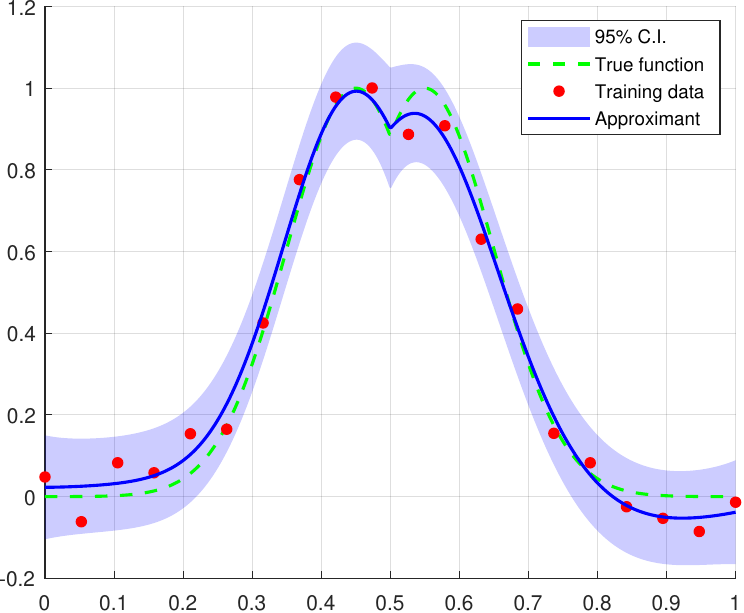} 
\caption{Stationary case (left) and VSK approach (right). The training data consist of $N=20$ uniform nodes.} \label{fig:approx_cusp_even} 
\end{figure}

\begin{figure}[H] 
\centering 
$
\begin{array}{c}
\includegraphics[width=0.35\linewidth]{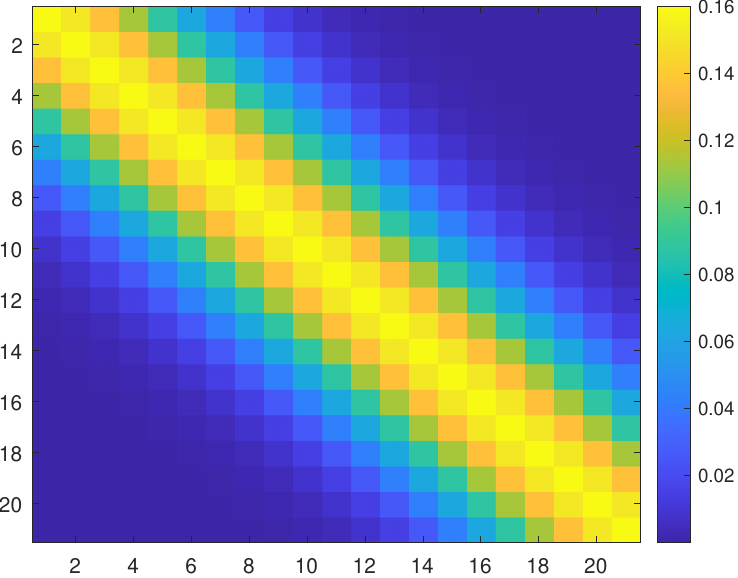} 
\hskip 0.5 cm
\includegraphics[width=0.35\linewidth]{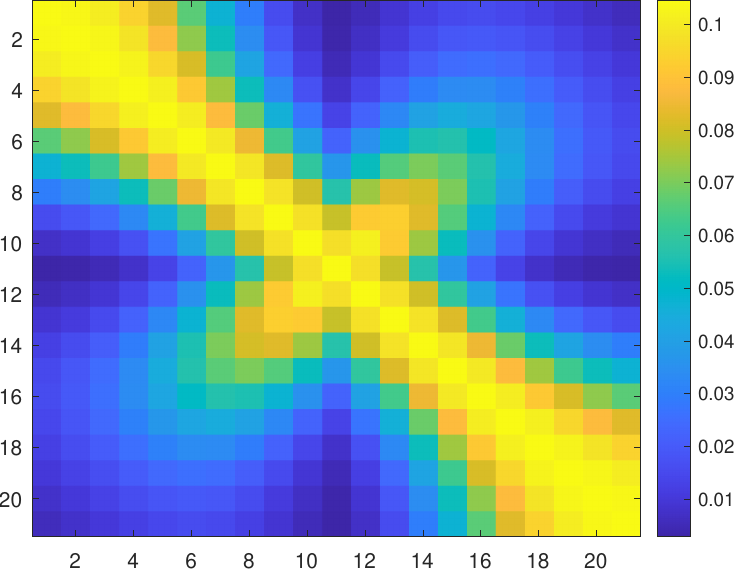} \\
\includegraphics[width=0.35\linewidth]{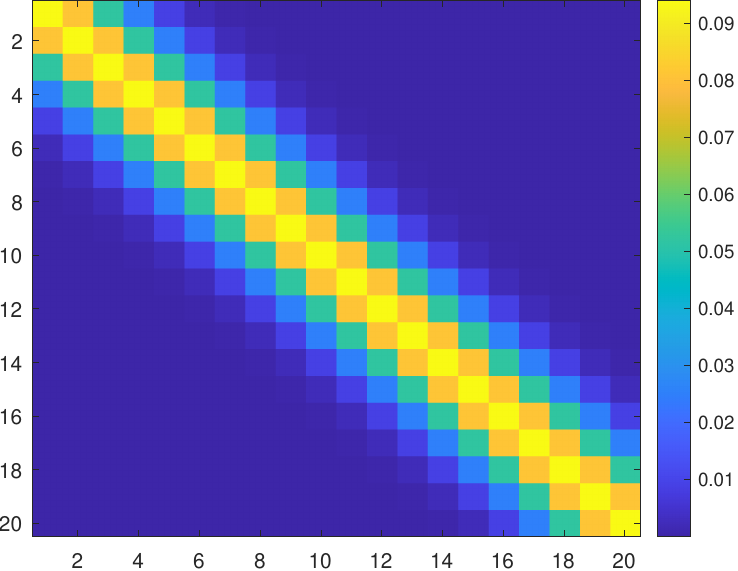} 
\hskip 0.5 cm
\includegraphics[width=0.35\linewidth]{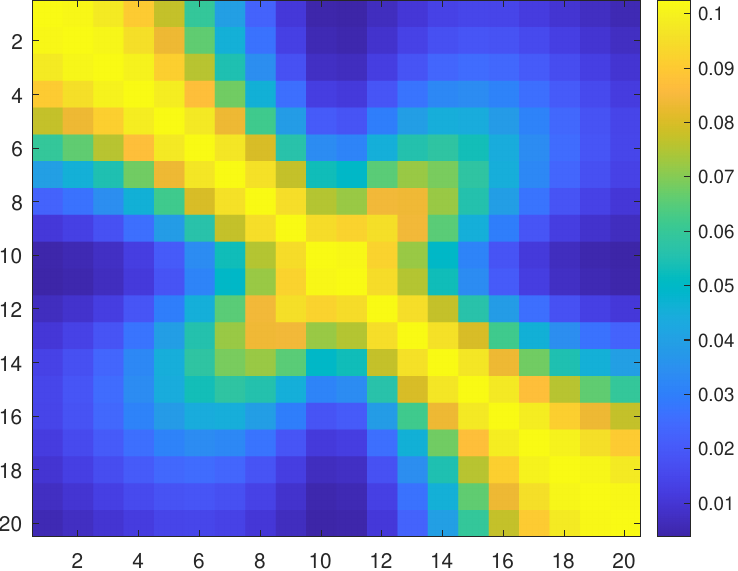} 
\end{array}
$
\caption{Training matrices in the stationary (left) and VSK (right) case: with $N=21$ (top) and $N=20$ (bottom) uniform nodes.} 
\label{fig:covariance_cusp} 
\end{figure}


\subsection{VSKs and other non-stationary kernels}\label{sec:results_others}

In this experiment, we investigate to what extent classical nonstationary Gaussian process kernels of Gibbs or Paciorek--Schervish type can approximate a VSK kernel beyond the local asymptotic regime established in Section~\ref{sec:theory}.
The goal is not to compare predictive performance, but rather to assess whether the theoretical local equivalence between VSK and Gibbs/Paciorek kernels can be leveraged in practice to reproduce the behavior of a VSK prior using standard nonstationary constructions.

We consider a one-dimensional regression problem on the interval $\Omega=[-1,1]$.
$N=9$ training points are chosen according to a Chebyshev distribution, and independent Gaussian noise with standard deviation $0.05    $ is added to the observations. The base kernel is the squared exponential kernel.

The target function is defined as $f(x) = \exp(x) - \cos(2\pi x)$,
which combines smooth exponential growth with moderate oscillatory behavior.
Both a stationary process and a VSK process are trained, with all hyperparameters estimated via {MLE}.

For the VSK model, the scaling map is chosen as $\psi(x) = f(x)$, so that the induced embedding incorporates global information about the geometry of the target function. Using the optimized VSK hyperparameters, we then construct a nonstationary Gaussian process with a Gibbs/Paciorek-type kernel. In one dimension and for the squared exponential base kernel, the Gibbs and Paciorek constructions coincide, and the local length scale is set according to the theoretical result, $\ell(x) = \frac{\ell}{\sqrt{1+(\psi'(x))^2}}$, as predicted by the local asymptotic analysis. Results in Figure \ref{fig:approx_fass} confirm that the VSK reconstruction is enhanced in case a precise prior is encoded, as observed in previous tests, however the reconstruction achieved by the Gibbs/Paciorek model with the defined $\ell$ shows a highly oscillating behavior that does not allow a precise reconstruction.

In addition to comparing the resulting interpolants, we then examine individual kernel basis functions centered at the central sampling node for the stationary, VSK, and Gibbs/Paciorek kernels.
This comparison, which is displayed in Figure \ref{fig:basis_fass}, reveals that the similarity between Gibbs/Paciorek and VSK kernels rapidly deteriorates away from the origin. The overlap between the corresponding basis functions is not notable besides the common value at the origin, indicating that the local equivalence does not extend to the global structure of the kernel. 
This numerical evidence indicates that VSKs encode different geometric information, which cannot be recovered globally through standard non-stationary approaches.

\begin{figure}[H] 
\centering 
\includegraphics[width=0.32\linewidth]{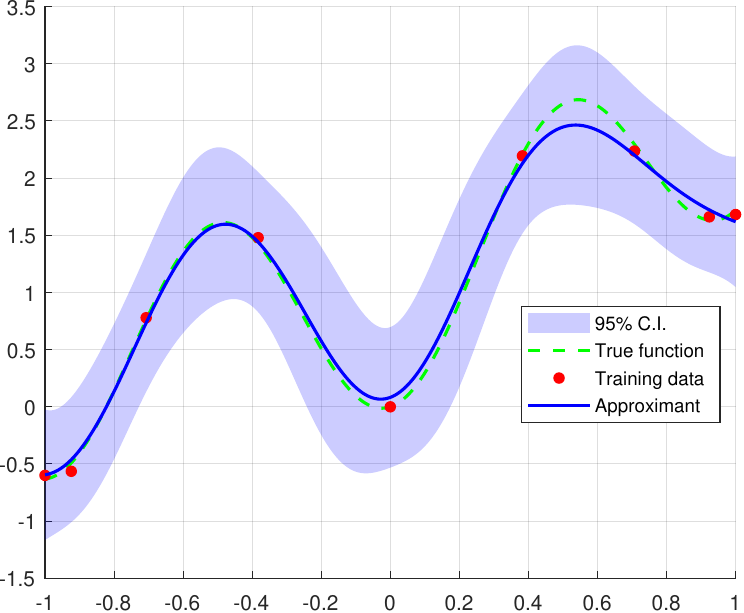}
\hskip 0.1 cm
\includegraphics[width=0.32\linewidth]{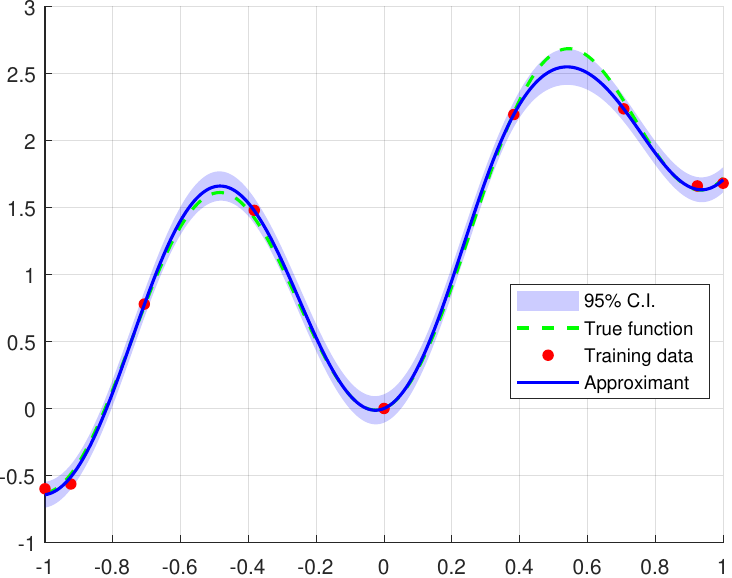} 
\hskip 0.1 cm
\includegraphics[width=0.32\linewidth]{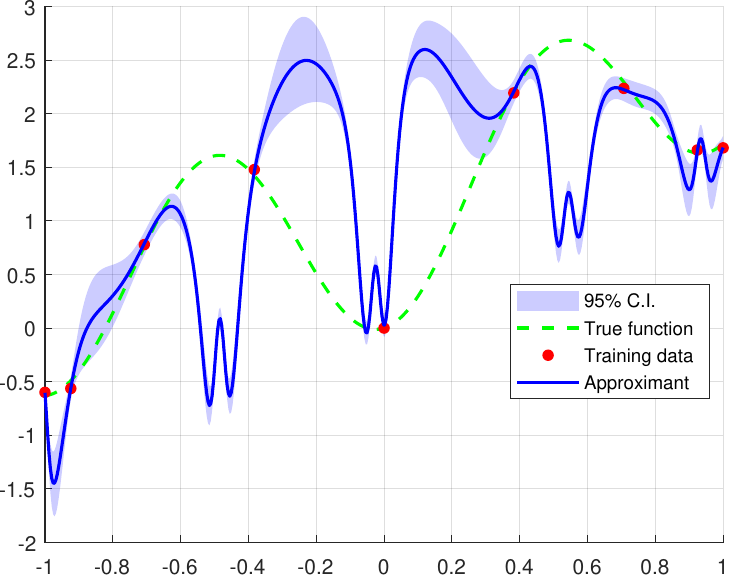} 
\caption{Stationary case (left), VSK approach (center) and a \textit{corresponding} Gibbs kernel (right). The training data consist of $N=9$ Chebyshev nodes.} \label{fig:approx_fass} 
\end{figure}

\begin{figure}[H] 
\centering 
\includegraphics[width=0.32\linewidth]{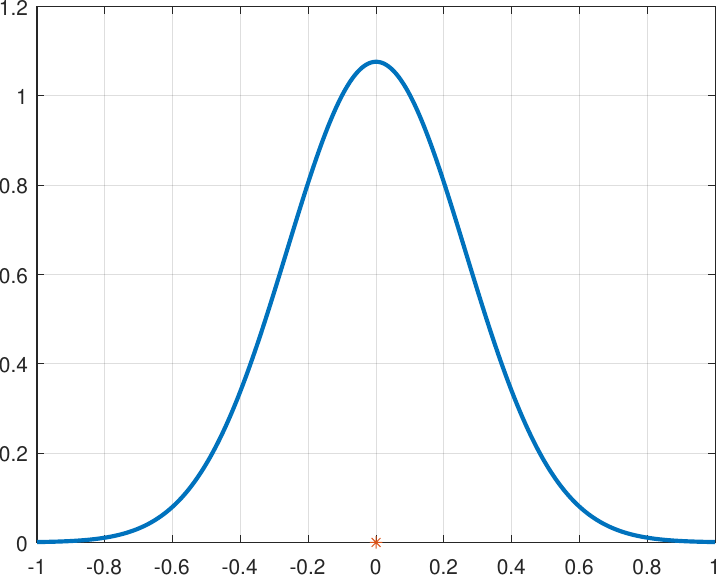}
\hskip 0.1 cm
\includegraphics[width=0.32\linewidth]{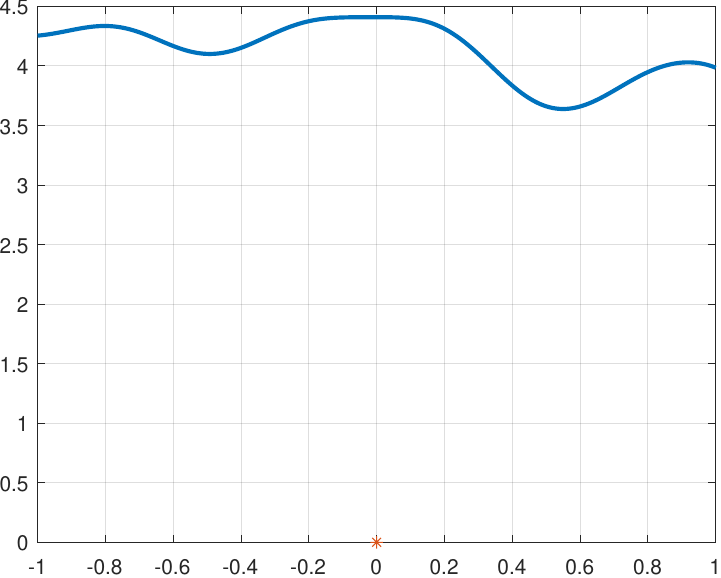} 
\hskip 0.1 cm
\includegraphics[width=0.32\linewidth]{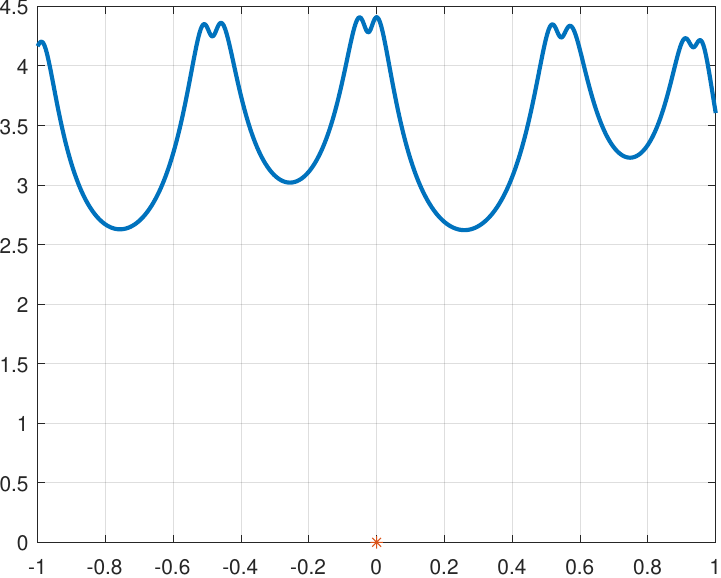} 
\caption{Basis functions centered at $x'=0$: stationary case (left), VSK approach (center) and a \textit{corresponding} Gibbs kernel (right).} 
\label{fig:basis_fass} 
\end{figure}

\section{Conclusions}
\label{conclusions}

This work provides a study of variably scaled kernels within the Kriging/Gaussian process framework and showed that the lifting induced by a scaling function generates a non-stationary covariance with a clear geometric interpretation. We highlighted the relationship between VSK-based prediction and classical kernel interpolation, and we also established explicit connections with existing non-stationary constructions.
The numerical experiments on discontinuous, corner-type, and highly irregular targets indicate that VSK-based models can yield more accurate reconstructions and uncertainty patterns that better reflect the structural information. At the same time, the comparison with Gibbs/Paciorek-type models suggests that the local asymptotic correspondence does not extend to a full global equivalence. Overall, VSKs provide a flexible and geometrically interpretable route to non-stationary Gaussian process modeling.




\bibliographystyle{abbrv}
\bibliography{refs_fin_doi}

@ARTICLE{Kozak2019,
	author = {Kozak, David and Holladay, Scott and Fasshauer, Gregory E.},
	title = {Intraday load forecasts with uncertainty},
	year = {2019},
	journal = {Energies},
	volume = {12},
	number = {10},
	doi = {10.3390/en12101833}
}

@ARTICLE{McCourt2017917,
	author = {McCourt, Michael and Fasshauer, Gregory E.},
	title = {Stable likelihood computation for gaussian random fields},
	year = {2017},
	journal = {Applied and Numerical Harmonic Analysis},
	pages = {917 – 943},
	doi = {10.1007/978-3-319-55556-0_16}
}

@article{scheuerer2013interpolation,
	author = {Scheuerer, M. and Schaback, R. and Schlather, M.},
	title = {Interpolation of spatial data-A stochastic or a deterministic problem?},
	year = {2013},
	journal = {European Journal of Applied Mathematics},
	volume = {24},
	number = {4},
	pages = {601 – 629},
	
	doi = {10.1017/S0956792513000016}
}

@book {wendland2005scattered,
    AUTHOR = {Wendland, H.},
     TITLE = {Scattered data approximation},
    SERIES = {Cambridge Monographs on Applied and Computational Mathematics},
    VOLUME = {17},
 PUBLISHER = {Cambridge University Press, Cambridge},
      YEAR = {2005},
     PAGES = {x+336},
      ISBN = {978-0521-84335-5; 0-521-84335-9},
   MRCLASS = {41-02 (41A10 41A63 65D10)},
  MRNUMBER = {2131724 (2006i:41002)},

	doi = {10.1017/CBO9780511617539}
}

@book{rasmussen2005gaussian,
	title = {Gaussian {Processes} for {Machine} {Learning}},
	copyright = {http://creativecommons.org/licenses/by-nc-nd/4.0/},
	isbn = {978-0-262-25683-4},
	doi = {10.7551/mitpress/3206.001.0001},
	language = {en},
	publisher = {The MIT Press},
	author = {Rasmussen, Carl Edward and Williams, Christopher K. I.},
	month = nov,
	year = {2005},
}

@book{stein1999interpolation,
	address = {New York, NY},
	series = {Springer {Series} in {Statistics}},
	title = {Interpolation of {Spatial} {Data}},
	copyright = {http://www.springer.com/tdm},
	isbn = {978-1-4612-7166-6 978-1-4612-1494-6},
	doi = {10.1007/978-1-4612-1494-6},
	publisher = {Springer New York},
	author = {Stein, Michael L.},
	year = {1999},
}

@book{berlinet2004rkhs,
	address = {Boston, MA},
	title = {Reproducing {Kernel} {Hilbert} {Spaces} in {Probability} and {Statistics}},
	copyright = {http://www.springer.com/tdm},
	isbn = {978-1-4613-4792-7 978-1-4419-9096-9},
	doi = {10.1007/978-1-4419-9096-9},
	language = {en},
	publisher = {Springer US},
	author = {Berlinet, Alain and Thomas-Agnan, Christine},
	year = {2004},
}

@article{aronszajn1950rkhs,
	title = {Theory of reproducing kernels},
	volume = {68},
	doi = {10.1090/S0002-9947-1950-0051437-7},
	language = {en},
	number = {3},
	journal = {Trans. Amer. Math. Soc.},
	author = {Aronszajn, N.},
	year = {1950},
	pages = {337--404},
}

@book{cressie1993statistics,
	edition = {1},
	series = {Wiley {Series} in {Probability} and {Statistics}},
	title = {Statistics for {Spatial} {Data}},
	isbn = {978-0-471-00255-0 978-1-119-11515-1},
	doi = {10.1002/9781119115151},
	language = {en},
	publisher = {Wiley},
	author = {Cressie, Noel A. C.},
	month = sep,
	year = {1993},
}

@inproceedings{goldberg1998heteroscedastic,
	title = {Regression with {Input}-dependent {Noise}: {A} {Gaussian} {Process} {Treatment}},
	volume = {10},
	booktitle = {Advances in {Neural} {Information} {Processing} {Systems}},
	publisher = {MIT Press},
	author = {Goldberg, Paul and Williams, Christopher and Bishop, Christopher},
	editor = {Jordan, M. and Kearns, M. and Solla, S.},
	year = {1997},

	doi = {10.5555/3008904.3008974}
}

@inproceedings{kersting2007mlhgp,
	address = {Corvalis Oregon USA},
	title = {Most likely heteroscedastic {Gaussian} process regression},
	isbn = {978-1-59593-793-3},
	doi = {10.1145/1273496.1273546},
	language = {en},
	booktitle = {Proceedings of the 24th international conference on {Machine} learning},
	publisher = {ACM},
	author = {Kersting, Kristian and Plagemann, Christian and Pfaff, Patrick and Burgard, Wolfram},
	month = jun,
	year = {2007},
	pages = {393--400},
}

@inproceedings{lazaro2011vhgp,
	address = {Madison, WI, USA},
	series = {{ICML}'11},
	title = {Variational heteroscedastic {Gaussian} process regression},
	isbn = {978-1-4503-0619-5},
	booktitle = {Proceedings of the 28th {International} {Conference} on {International} {Conference} on {Machine} {Learning}},
	publisher = {Omnipress},
	author = {Lázaro-Gredilla, Miguel and Titsias, Michalis K.},
	year = {2011},
	pages = {841--848},

	doi = {10.5555/3104482.3104588}
}

@article{bozzini2015interpolation,
	title = {Interpolation with variably scaled kernels},
	volume = {35},
	doi = {10.1093/imanum/drt071},
	language = {en},
	number = {1},
	journal = {IMA Journal of Numerical Analysis},
	author = {Bozzini, M. and Lenarduzzi, L. and Rossini, M. and Schaback, R.},
	month = jan,
	year = {2015},
	pages = {199--219},
}

@article{rossini2022overview,
	address = {IT},
	title = {Variably scaled kernels: an overview},
	volume = {15},
	shorttitle = {Variably scaled kernels},
	doi = {10.14658/pupj-drna-2022-4-6},
	language = {eng},
	number = {12/2022},
	journal = {Dolomites Research Notes on Approximation},
	publisher = {Padova University Press},
	author = {Rossini, Milvia},
	year = {2022},
	pages = {61--72},
}

@article{demarchi2019vsdk,
	title = {Jumping with variably scaled discontinuous kernels ({VSDKs})},
	volume = {60},
	doi = {10.1007/s10543-019-00786-z},
	language = {en},
	number = {2},
	journal = {{BIT} Numerical Mathematics},
	author = {De Marchi, S. and Marchetti, F. and Perracchione, E.},
	month = jun,
	year = {2020},
	pages = {441--463},
}

@ARTICLE{DeMarchi-et-al:2020,
author={De Marchi, S. and Erb, W. and Marchetti, F. and Perracchione, E. and Rossini, M.},
title={Shape-driven interpolation with discontinuous kernels: Error analysis, edge extraction, and applications in magnetic particle imaging},
journal={SIAM Journal on Scientific Computing},
year={2020},
volume={42},
number={2},
pages={B472-B491},

	doi = {10.1137/19M1248777}
}

@ARTICLE{campi2021vsk,
author={Campi, C. and Marchetti, F. and Perracchione, E.},
title={Learning via variably scaled kernels},
journal={Advances in Computational Mathematics},
year={2021},
volume={47},
number={4},

	doi = {10.1007/s10444-021-09875-6}
}

@inproceedings{paciorek2004nonstationary,
	title = {Nonstationary {Covariance} {Functions} for {Gaussian} {Process} {Regression}},
	volume = {16},
	booktitle = {Advances in {Neural} {Information} {Processing} {Systems}},
	publisher = {MIT Press},
	author = {Paciorek, Christopher and Schervish, Mark},
	editor = {Thrun, S. and Saul, L. and Schölkopf, B.},
	year = {2003},

	doi = {10.5555/2981345.2981380}
}

@article{paciorek2006environmetrics,
	title = {Spatial modelling using a new class of nonstationary covariance functions},
	volume = {17},
	copyright = {http://onlinelibrary.wiley.com/termsAndConditions\#vor},
	doi = {10.1002/env.785},
	language = {en},
	number = {5},
	journal = {Environmetrics},
	author = {Paciorek, Christopher J. and Schervish, Mark J.},
	month = aug,
	year = {2006},
	pages = {483--506},
}

@ARTICLE{Rossini:2018,
author={Rossini, M.},
title={Interpolating functions with gradient discontinuities via variably scaled kernels},
journal={Dolomites Research Notes on Approximation},
year={2018},
volume={11},
pages={3--14},
	doi = {10.14658/PUPJ-DRNA-2018-2-2}
}

@inproceedings{wilson2013sm,
	title = {Gaussian process kernels for pattern discovery and extrapolation},
	number = {PART 3},
	booktitle = {30th {International} {Conference} on {Machine} {Learning}, {ICML} 2013},
	author = {Wilson, Andrew Gordon and Adams, Ryan Prescott},
	year = {2013},
	pages = {2104 -- 2112},
	doi = {10.5555/3042817.3043056}
}

@book{Bernstein,
	address = {Princeton, N.J},
	edition = {2nd ed},
	title = {Matrix mathematics: theory, facts, and formulas},
	isbn = {978-0-691-13287-7},
	shorttitle = {Matrix mathematics},
	language = {eng},
	publisher = {Princeton university press},
	author = {Bernstein, Dennis S.},
	year = {2009},

	doi = {10.1515/9781400833344}
}

@article{gramacylee2008tgp,
	title = {Bayesian {Treed} {Gaussian} {Process} {Models} {With} an {Application} to {Computer} {Modeling}},
	volume = {103},
	doi = {10.1198/016214508000000689},
	language = {en},
	number = {483},
	journal = {Journal of the American Statistical Association},
	author = {Gramacy, Robert B and Lee, Herbert K. H},
	month = sep,
	year = {2008},
	pages = {1119--1130},
}

@article{higdon1998processconv,
  title={A Process-convolution Approach to Modelling Temperatures in the North Atlantic Ocean},
  author={Higdon, Dave},
  journal={Environmental and Ecological Statistics},
  volume={5},
  pages={173--190},
  year={1998},
	doi = {10.1023/A:1009666805688}
}

@article{sampson1992nonparametric,
  title={Nonparametric Estimation of Nonstationary Spatial Covariance Structure},
  author={Sampson, Paul D and Guttorp, Peter},
  journal={Journal of the American Statistical Association},
  volume={87},
  number={417},
  pages={108--119},
  year={1992},
	doi = {10.1080/01621459.1992.10475181}
}

@article{schmidt2003deformations,
  title={Bayesian Inference for Non-stationary Spatial Covariance Structure via Spatial Deformations},
  author={Schmidt, Alexandra M and O'Hagan, Anthony},
  journal={Journal of the Royal Statistical Society: Series B},
  volume={65},
  number={3},
  pages={743--758},
  year={2003},
	doi = {10.1111/1467-9868.00413}
}

@article{priestley1965,
  title={Evolutionary Spectra and Non-stationary Processes},
  author={Priestley, Maurice B},
  journal={Journal of the Royal Statistical Society: Series B},
  volume={27},
  number={2},
  pages={204--229},
  year={1965},
	doi = {10.1111/j.2517-6161.1965.tb01488.x}
}

@article{silverman1957,
  title={On the Estimation of a Probability Density Function by the Maximum Penalized Likelihood Method},
  author={Silverman, Bernard W},
  journal={Annals of Statistics},
  volume={10},
  number={3},
  pages={795--810},
  year={1982},
	doi = {10.1214/aos/1176345872}
}

@book{berman1979nonnegative,
author = {Berman, Abraham and Plemmons, Robert J.},
title = {Nonnegative Matrices in the Mathematical Sciences},
publisher = {Society for Industrial and Applied Mathematics},
year = {1994},
doi = {10.1137/1.9781611971262},
address = {},
edition   = {},
}

@article{borovitskiy2020matern,
  title={Mat{\'e}rn Gaussian Processes on Riemannian Manifolds},
  author={Borovitskiy, Viacheslav and Terenin, Alexander and Mostowsky, Peter and Deisenroth, Marc Peter},
  journal={Advances in Neural Information Processing Systems},
  volume={33},
  pages={12426--12437},
  year={2020},
	doi = {10.5555/3495724.3496766}
}

@incollection {smola-et-al:2001,
    AUTHOR = {Sch\"olkopf, Bernhard and Herbrich, Ralf and Smola, Alex J.},
     TITLE = {A generalized representer theorem},
 BOOKTITLE = {Computational learning theory ({A}msterdam, 2001)},
    SERIES = {Lecture Notes in Comput. Sci.},
    VOLUME = {2111},
     PAGES = {416--426},
 PUBLISHER = {Springer, Berlin},
      YEAR = {2001},
      ISBN = {3-540-42343-5},
   MRCLASS = {68T05},
  MRNUMBER = {2042050},
	doi = {10.1007/3-540-44581-1_27}
}

@article{Schaback_Wendland_2006, 
author = {Schaback, Robert and Wendland, Holger},
	title = {Kernel techniques: From machine learning to meshless methods},
	year = {2006},
	journal = {Acta Numerica},
	volume = {15},
	pages = {543 – 639},
	doi = {10.1017/S0962492906270016}
}

@article{DeMarchi-et-al:2019,
  author  = {De Marchi, Stefano and Martínez, A{ngeles} and Perracchione, Emma and Rossini, Milvia},
  title   = {RBF-Based Partition of Unity Methods for Elliptic PDEs: Adaptivity and Stability Issues Via Variably Scaled Kernels},
  journal = {Journal of Scientific Computing},
  year    = {2019},
  volume  = {79},
  number  = {1},
  pages   = {321--344},
  doi     = {10.1007/s10915-018-0851-2}
}

@book{Fasshauer-et-al:2015,
  title={Kernel-based approximation methods using Matlab},
  author={Fasshauer, Gregory E and McCourt, Michael J},
  volume={19},
  year={2015},
  publisher={World Scientific Publishing Company},
	doi = {10.1142/9335}
}

\section*{Appendix}

Proof of Theorem \ref{prop:vsk-local-symmetric}.
\begin{proof}
The VSK covariance is defined in \eqref{eq:vsk-def:cov}, where for $x,x' \in \Omega:$
$$\kappa^\Psi_\ell(x,x')
=
\varphi_\ell\!\left(d_\Psi(x,x')\right),$$
being
\begin{equation}\label{dpsi}
d_\Psi(x,x')
=
\sqrt{\|x-x'\|_2^2 + (\psi(x)-\psi(x'))^2}.
\end{equation}

Let $h=x'-x$ with $\|h\|_2=o(1)$. Since $\psi\in C^1(\Omega)$, Taylor expansion yields
\[
\psi(x+h)
=
\psi(x)
+
\nabla\psi(x)^\intercal h
+
o(\|h\|_2). \quad h\to 0.
\]
Therefore,
\[
(\psi(x)-\psi(x'))^2
=
(\nabla\psi(x)^\intercal h)^2
+
{o(\|h\|_2^2),\qquad h\to 0.}
\]
Substituting into \eqref{dpsi}, we get
\begin{align*}
d_\Psi(x,x')^2
&=
\|h\|_2^2 + (\nabla\psi(x)^\intercal h)^2 + o(\|h\|_2^2)
\nonumber\\
&=
h^\intercal\!\left(I+\nabla\psi(x)\nabla\psi(x)^\intercal\right)h
+{o(\|h\|_2^2),\qquad h\to 0.}
\end{align*}
Repeating the same expansion with the roles of $x$ and $x'$ exchanged yields
\[
d_\Psi(x,x')^2
=
h^\intercal\!\left(I+\nabla\psi(x')\nabla\psi(x')^\intercal\right)h
+
o(\|h\|_2^2),\qquad h\to 0.
\]
Averaging the two expressions produces the symmetric expansion
\[
d_\Psi(x,x')^2
=
h^\intercal \overline{M}(x,x') h
+
o(\|h\|_2^2),\qquad h\to 0,
\]
where $\overline{M}(x,x')$ is defined in \eqref{eq:Mbar-def}. 

Since $\overline{M}(x,x')$ is symmetric positive definite and depends continuously on
$x,x'$, there exists a neighborhood of the diagonal $\{(x,x'):\|x-x'\|_2<\rho\}$
and constants $0<c\le C<\infty$ such that
\[
c\|h\|_2^2
\le
h^\top \overline{M}(x,x')h
\le
C\|h\|_2^2.
\]
We have that
$d_\Psi(x,x')^2=h^\top \overline{M}(x,x')h+o(\|h\|_2^2)$.
Then
\[
d_\Psi(x,x')-\sqrt{h^\top \overline{M}(x,x')h}
=
\frac{o(\|h\|_2^2)}{d_\Psi(x,x')+\sqrt{h^\top \overline{M}(x,x')h}},
\]
and,
\[
d_\Psi(x,x')+\sqrt{h^\top \overline{M}(x,x')h}
\ge
\sqrt{h^\top \overline{M}(x,x')h}
\ge
\sqrt{c}\,\|h\|_2.
\]
Therefore,
\[
\bigl|d_\Psi(x,x')-\sqrt{h^\top \overline{M}(x,x')h}\bigr|
\le
\frac{|o(\|h\|_2^2)|}{\sqrt{c}\,\|h\|_2}
=
o(\|h\|_2),
\qquad h\to 0,
\]
and we conclude that
\[
d_\Psi(x,x')
=
\sqrt{h^\top \overline{M}(x,x')h}
+
o(\|h\|_2),
\qquad x'\to x.
\]

$\kappa^\Psi_\ell(x,x')=\varphi(d_\Psi(x,x')/\ell)$ yields
\[
\varphi_\ell(d_\Psi(x,x'))
=
\varphi\!\left(\frac{\sqrt{h^\top \overline{M}(x,x')h}}{\ell}\right)
+
o(1),
\qquad x'\to x,
\]
since $\varphi$ is continuous at $0$ and $d_\Psi(x,x')\to 0$ as $x'\to x$.
Finally,
\[
\phi^\Psi_{\ell,f,n}(x,x')
=
\sigma_f^2\,
\varphi\!\left(
\frac{\sqrt{h^\top \overline{M}(x,x')h}}{\ell}
\right)
+
\sigma_n^2\delta_{xx'}
+
o(1),
\qquad x'\to x,
\]
which proves the claim.

\end{proof}

Proof of Corollary \ref{cor:vsk-d1}.
\begin{proof}
Being $d=1$ for $x,x' \in \Omega$, 
$\overline{M}(x,x')$ can be written as
\[
\overline{M}(x,x')
=
1+\frac{1}{2}\big((\psi'(x))^2+(\psi'(x'))^2\big).
\]
Thus,
\[
\sqrt{h^\intercal \overline{M}(x,x') h}
=
|x-x'|
\sqrt{1+\frac{(\psi'(x))^2+(\psi'(x'))^2}{2}}.
\]
Since $\psi\in C^1(\Omega)$, we have
$\psi'(x')=\psi'(x)+O(|x'-x|)$, hence
$(\psi'(x'))^2=(\psi'(x))^2+O(|x'-x|)$.
Letting $a=(\psi'(x))^2$ and $b=(\psi'(x'))^2$, so that
$b=a+\delta$ with $\delta=O(|x'-x|)$.

Then
\[
\sqrt{1+\frac{a+b}{2}}
=
\sqrt{1+a+\frac{\delta}{2}}=\sqrt{1+a}\sqrt{1+\frac{\delta}{2(1+a)}}.
\]
Using the Taylor expansion at $t=0$: $\sqrt{1+t}=1+t/2+O(t^2)$, we get
\begin{equation}\label{eq:parte_1}
\sqrt{1+a}\sqrt{1+\frac{\delta}{2(1+a)}}=
\sqrt{1+a}
\left(
1+\frac{\delta}{4(1+a)}+{O(\delta^2),\quad \delta\to0}
\right) 
\end{equation}

On the other hand, by definition of $\tilde{\ell}^{\Psi}$,
\begin{align*}
\tilde{\ell}^{\Psi}(x,x')^2
& = 
\frac{\ell^2}{2}
\left(
\frac{1}{1+a}+\frac{1}{1+b}
\right) \\ &= 
\frac{\ell^2}{2}
\left(
\frac{1}{1+a}+\frac{1}{1+a+\delta}
\right)
\\ &= 
\frac{\ell^2}{2}
\left(
\frac{1}{1+a}+\frac{1}{1+a}\frac{1}{1+\frac{\delta}{1+a}}
\right).
\end{align*}

Using the Taylor expansion at $t=0$: $1/(1+t)=1-t+O(t^2)$, we get
\begin{align*}
& \frac{\ell^2}{2}
\left(
\frac{1}{1+a}+\frac{1}{1+a}\frac{1}{1+\frac{\delta}{1+a}}
\right)=\\&
\frac{\ell^2}{2}
\left(
\frac{1}{1+a}+\frac{1}{1+a}\left(1-\frac{\delta}{1+a}+O(\delta^2)\right)
\right),
\end{align*}
and thus
\begin{align*}
&
\frac{\ell^2}{2}
\left(
\frac{1}{1+a}+\frac{1}{1+a}\left(1-\frac{\delta}{1+a}+O(\delta^2)\right)
\right)
=\\&
\frac{\ell^2}{1+a}
\left(
1-\frac{\delta}{2(1+a)}+{O(\delta^2)}
\right),
\end{align*}

Taking square roots and using the Taylor expansion
$\sqrt{1+t}=1+\tfrac{t}{2}+O(t^2)$ as $t\to0$ yields
\[
\tilde{\ell}^{\Psi}(x,x')
=
\frac{\ell}{\sqrt{1+a}}
\left(
1-\frac{\delta}{4(1+a)}
{O(\delta^2)}
\right).
\]

Inverting this expression and using the expansion
$(1-t)^{-1}=1+t+{O(t^2)}$ gives
\begin{equation}\label{eq:parte_2}
\frac{\ell}{\tilde{\ell}^{\Psi}(x,x')}
=
\sqrt{1+a}
\left(
1+\frac{\delta}{4(1+a)}+{O(\delta^2)}
\right).
\end{equation}

Combining the two expansions \eqref{eq:parte_1} and \eqref{eq:parte_2} yields 
\[
\sqrt{1+\frac{(\psi'(x))^2+(\psi'(x'))^2}{2}}
=
\frac{\ell}{\tilde{\ell}^{\Psi}(x,x')}
+
{o(1),\quad x'\to x}.
\]

As explicitly shown in the previous proof, since $\tilde{\ell}^{\Psi}(x,x')$ is continuous and $\varphi$ is continuous at
the origin, the $o(1)$ perturbation of the argument is preserved under
composition with $\varphi$. 
\end{proof}

Proof of Corollary \ref{prop:vsk-ps}.
\begin{proof}
Since $\psi\in C^1(\Omega)$, the map $x\mapsto \nabla\psi(x)$ is continuous.
Hence $x\mapsto M(x)$ is continuous and, in particular,
\[
M(x')=M(x)+o(1),
\qquad x'\to x.
\]
Since $M(x)$ is symmetric positive definite for all $x$, the matrix inversion
map is continuous on the set of SPD matrices, and therefore
\[
M(x')^{-1}=M(x)^{-1}+o(1),
\qquad x'\to x.
\]

Using \eqref{eq:Sigma-vsk}, we obtain
\[
\frac{1}{2}\big(\Sigma(x)+\Sigma(x')\big)
=
\frac{\ell^2}{2}\big(M(x)^{-1}+M(x')^{-1}\big)
=
\ell^2\left(M(x)^{-1}+o(1)\right),
\qquad x'\to x.
\]
Taking inverses and using continuity of the inverse map on SPD matrices yields
\[
\Big(\tfrac{1}{2}(\Sigma(x)+\Sigma(x'))\Big)^{-1}
=
\frac{1}{\ell^2}\left(M(x)+o(1)\right),
\qquad x'\to x.
\]
Repeating the same argument with the roles of $x$ and $x'$ exchanged gives
\[
\Big(\tfrac{1}{2}(\Sigma(x)+\Sigma(x'))\Big)^{-1}
=
\frac{1}{\ell^2}\left(M(x')+o(1)\right),
\qquad x'\to x.
\]
Averaging the two expressions yields
\[
\Big(\tfrac{1}{2}(\Sigma(x)+\Sigma(x'))\Big)^{-1}
=
\frac{1}{\ell^2}\left(\overline{M}(x,x')+o(1)\right),
\qquad x'\to x.
\]

Finally, multiplying on the left and right by $h^\intercal$ and $h$ gives
\[
h^\intercal
\Big(\tfrac{1}{2}(\Sigma(x)+\Sigma(x'))\Big)^{-1}
h
=
\frac{1}{\ell^2}\,h^\intercal \overline{M}(x,x')h
+
h^\intercal o(1)\, h.
\]
Since $h^\intercal o(1)\,h = o(\|h\|_2^2)$, this proves \eqref{eq:ps-local-limit}.
\end{proof}

\end{document}